\documentclass[11pt]{article}
\usepackage[margin=1in]{geometry}
\usepackage[usenames,dvipsnames,svgnames,table]{xcolor} 

\usepackage{longtable}
\usepackage{hhline}
\usepackage{float}

\usepackage{setspace}
\usepackage{mathrsfs}
\usepackage{xspace}
\usepackage{enumitem} 
\usepackage{rotfloat} 
\usepackage{graphicx}
\usepackage{amssymb, amsmath, amsthm}
\usepackage{verbatim}
\usepackage{natbib}
\usepackage{authblk}
\usepackage{multirow}
\usepackage{subcaption} 
\usepackage{array}
\usepackage{hyperref}
\newcolumntype{L}{>{\centering\arraybackslash}m{0.1\linewidth}}

\newtheorem{theorem}{Theorem}
\newtheorem{lemma}[theorem]{Lemma}
\newtheorem{proposition}[theorem]{Proposition}
\newtheorem{corollary}[theorem]{Corollary}

\newtheorem{example}{Example}[section]
\newtheorem{definition}{Definition}
\newtheorem{assumption}{Assumption}

\newcommand{\calH}{\mathcal{H}}

\newcommand{\bbS}{\mathbb{S}}

\newcommand{\R}{\mathbb{R}}

\newcommand{\E}{\mathbb{E}}
\newcommand{\Prob}{\mathbb{P}}

\newcommand{\inner}[2]{\left\langle #1, #2 \right\rangle}

\newcommand{\MMD}{\operatorname{MMD}}
\newcommand{\ESS}{\operatorname{ESS}}
\newcommand{\tr}{\operatorname{tr}}
\newcommand{\Cov}{\operatorname{Cov}}
\newcommand{\Var}{\operatorname{Var}}

\newcommand{\iid}{\mathrm{iid}}

\newcommand{\Sphere}{\mathbb{S}}
\newcommand{\Hk}{\mathcal{H}_k}

\newcommand{\toP}{\xrightarrow{\Prob}}

\begin{document}
	
\title{Intrinsic effective sample size for manifold-valued Markov chain Monte Carlo via kernel discrepancy}
\author[1,2]{Kisung You}
\affil[1]{Department of Mathematics, Baruch College}
\affil[2]{Department of Mathematics, The Graduate Center, City University of New York}
\date{}

\maketitle
\begin{abstract}
Effective sample size is a standard summary of Markov chain Monte Carlo output, but it is usually attached to scalar or Euclidean summaries chosen by the analyst. For manifold-valued samples this choice is not canonical: coordinate-wise effective sample sizes can change under rotations, chart changes, or alternative embeddings of the same underlying path. We propose an intrinsic effective sample size based on kernel discrepancy. The proposed quantity is the number of independent draws that would yield the same expected squared kernel discrepancy between the empirical distribution and the target distribution. This gives an exact finite-sample risk interpretation, an asymptotic integrated-autocorrelation representation, and a coordinate-free diagnostic whenever the kernel respects the geometry of the state space. We establish invariance under transported kernels, operator and principal-direction interpretations, and consistency of a lag-window estimator under boundedness and absolute-regularity conditions. We also discuss valid kernel constructions on manifolds, emphasizing that geodesic Gaussian kernels are not generally positive definite on curved spaces. Sphere experiments illustrate rotation invariance and calibration of the proposed diagnostic against empirical distributional error.
\end{abstract}

\section{Introduction}
Effective sample size (ESS) is among the most common summaries reported for Markov chain Monte Carlo (MCMC) output. Formally, ESS is defined through Markov chain central limit theory for scalar or vector-valued functionals and quantifies the loss of efficiency relative to independent sampling. In practice this principle is often instantiated by selecting scalar summaries $f(X_1),\ldots,f(X_n)$ and computing ESS from the associated time series. In Euclidean problems a standard choice is to take $f$ to be a coordinate projection, or to aggregate coordinate-wise ESS values. This reliance on coordinates is often acceptable for a fixed Euclidean representation. Moreover, multivariate ESS, as developed by \citet{vats_2019_MultivariateOutputAnalysis}, removes dependence on nonsingular linear transformations of a chosen vector output. For manifold-valued output, however, the choice of Euclidean coordinates, charts, or embeddings is itself noncanonical. The same stored path can exhibit sharply different coordinate-wise ESS values after a rotation, a change of chart, or a different embedding, even though the underlying chain on the manifold has not changed.

This paper proposes a coordinate-free alternative. The guiding principle is that ESS should be defined relative to an estimation task and a loss. On a manifold, both should respect the geometry. We focus on the most basic task of approximating the target probability law itself. Given a positive-definite kernel $k$ on the state space, the empirical law $\hat\Pi_n=n^{-1}\sum_{t=1}^n\delta_{X_t}$ can be compared with the target law $\Pi$ by the squared maximum mean discrepancy,
\[
\MMD_k^2(\hat\Pi_n,\Pi)
=\left\|\int k(x,\cdot)\,d\hat\Pi_n(x)-\int k(x,\cdot)\,d\Pi(x)\right\|_{\Hk}^2,
\]
where $\Hk$ is the reproducing-kernel Hilbert space associated with $k$  \citep{gretton_2012_KernelTwoSampleTest}. We define the kernel ESS to be the independent sample size that gives the same expected value of this loss. This yields a scalar summary, but one attached to a distributional and coordinate-free task rather than to an arbitrary coordinate projection.

The central identity is simple but useful. If
\[
\phi(x)=k(x,\cdot)-\int k(u,\cdot)\,\Pi(du)
\]
is the centered feature map and
$
\gamma_\ell=\E[\inner{\phi(X_0)}{\phi(X_\ell)}_{\Hk}],
$
then stationarity gives the exact finite-sample formula
\[
\E[\MMD_k^2(\hat\Pi_n,\Pi)]
=
\frac{1}{n^2}\left\{n\gamma_0+2\sum_{\ell=1}^{n-1}(n-\ell)\gamma_\ell\right\}.
\]
Thus the independent-sample risk $\gamma_0/n$ is inflated or deflated by the same kind of lag-covariance structure that underlies classical integrated autocorrelation time. Under summability of the $\gamma_\ell$'s, the asymptotic risk constant is
\[
\sigma_k^2=\gamma_0+2\sum_{\ell=1}^\infty\gamma_\ell,
\]
and the asymptotic kernel ESS is $n\gamma_0/\sigma_k^2$, with the usual convention that negative dependence can yield an ESS exceeding $n$.

The proposal has three advantages. First, once the kernel is chosen intrinsically, the resulting ESS is invariant under coordinate transformations and isometries. Second, it is distributional: it measures effective independent sample size for approximating the target law under a kernel loss. When the kernel is characteristic, this loss is a genuine metric on probability laws \citep{sriperumbudur_2011_UniversalityCharacteristicKernels}. Third, the definition has a finite-$n$ risk interpretation, which leads naturally to plug-in estimators and to a mean-square precision criterion for MCMC output.

The construction is not kernel-free, which is intentional. ESS is task-dependent even in Euclidean output analysis. Here, the kernel determines which distributional discrepancies matter. The role of the paper is to define and analyze the effective independent sample size associated with that specified intrinsic discrepancy. The choice of kernel is therefore an inferential choice, analogous to the choice of summary in scalar ESS or the choice of norm and covariance functional in multivariate ESS. Connections between RKHS discrepancies and distance-based statistics further support this distributional interpretation \citep{sejdinovic_2013_EquivalenceDistancebasedRKHSbased}.

For the potentially curved state space, kernel validity is a serious issue. It is not enough to take a geodesic distance $d_g$ and write
$
k(x,y)=\exp\{-d_g(x,y)^2/h^2\}.
$
On a curved manifold this function is generally not positive definite. \citet{feragen_2015_GeodesicExponentialKernels} showed that positive definiteness of the geodesic Gaussian for all bandwidths forces a flatness condition. Hence geodesic Gaussians are not generally safe defaults. We therefore discuss two safe construction principles: the Schoenberg--Gneiting theory of isotropic positive-definite functions on spheres \citep{schoenberg_1942_PositiveDefiniteFunctions, gneiting_2013_StrictlyNonstrictlyPositive}, and pullback kernels induced by embeddings or diffeomorphisms into Euclidean spaces. The latter covers projection kernels on Grassmann manifolds, log-Euclidean kernels on SPD manifolds, and Cholesky-based kernels on correlation manifolds.

The remainder of the paper is organized as follows.  Section~\ref{sec:definition} defines kernel ESS as a distributional risk ratio and records the independent-sampling benchmark.  Section~\ref{sec:population} develops the population theory, including the exact finite-sample MMD risk identity, the asymptotic integrated-autocorrelation representation, invariance under transported kernels, and operator interpretations.  Section~\ref{sec:estimation} introduces a Gram-matrix lag-window estimator, gives a consistency result, and describes a kernel-MMD precision rule.  Section~\ref{sec:kernels} discusses valid kernel constructions on manifolds, with particular attention to spheres and pullback kernels.  Section~\ref{sec:experiments} presents numerical experiments on the sphere, and Section~\ref{sec:discussion} concludes with practical considerations and extensions.  All proofs are collected in the Appendix.

\section{Kernel ESS as a distributional risk ratio}
\label{sec:definition}

We begin with the definition in a general state-space setting. The purpose of this abstraction is not to obscure the geometry, but to separate the probabilistic identity from the particular manifold on which it will be used. The manifold enters through the kernel: if the kernel is invariant under the relevant geometry, then the resulting ESS inherits that invariance.

Let $(M,\mathscr B)$ be a measurable space. In applications, $M$ will be a Riemannian manifold with its Borel $\sigma$-field. Let $\{X_t\}_{t\in\mathbb Z}$ be a strictly stationary process with marginal law $\Pi$. Working with a two-sided stationary version simplifies notation. All definitions for an observed path $X_1,\ldots,X_n$ are one-sided. For MCMC applications, stationarity is the standard idealization after burn-in.

\begin{assumption}[Kernel regularity]
\label{ass:kernel}
The kernel $k:M\times M\to\R$ is symmetric, positive definite, and bounded on the diagonal:
\[
K_0:=\sup_{x\in M} k(x,x)<\infty.
\]
The feature map $x\mapsto k(x,\cdot)$ from $M$ into the reproducing-kernel Hilbert space $\Hk$ is Bochner measurable.
\end{assumption}

By positive definiteness, $|k(x,y)|\le K_0$ for all $x,y$. Assumption~\ref{ass:kernel} therefore ensures Bochner integrability of the feature map. In the main manifold examples, measurability follows automatically. If $M$ is second-countable and $k$ is continuous, then
\[
\|k(x,\cdot)-k(y,\cdot)\|_{\Hk}^2=k(x,x)+k(y,y)-2k(x,y),
\]
so the feature map is continuous. Since its image is separable due to separability of $M$, the map is Bochner measurable.

Define the kernel mean embedding of $\Pi$ by
\[
\mu_\Pi=\int_M k(x,\cdot)\,\Pi(dx)\in\Hk
\]
and the centered feature map by
\[
\phi(x)=k(x,\cdot)-\mu_\Pi.
\]
Then $\int\phi(x)\,\Pi(dx)=0$ in $\Hk$, and $\|\phi(x)\|_{\Hk}\le 2K_0^{1/2}$. For the empirical law $\hat\Pi_n=n^{-1}\sum_{t=1}^n\delta_{X_t}$,
\begin{equation}
\label{eq:mmd-centred}
\MMD_k^2(\hat\Pi_n,\Pi)
=
\left\|\frac1n\sum_{t=1}^n\phi(X_t)\right\|_{\Hk}^2.
\end{equation}
This is the squared RKHS distance between the empirical measure and the target. It includes the diagonal terms of the empirical measure and should not be confused with the unbiased two-sample $U$-statistic sometimes used to estimate MMD in testing problems.

For $\ell\in\mathbb Z$, define the scalar feature-autocovariance
\begin{equation}
\label{eq:gamma}
\gamma_\ell=\E\inner{\phi(X_0)}{\phi(X_\ell)}_{\Hk}.
\end{equation}
Stationarity and symmetry of the Hilbert inner product imply $\gamma_{-\ell}=\gamma_\ell$. In particular,
\[
\gamma_0=\E\|\phi(X_0)\|_{\Hk}^2.
\]
If $Y_1,\ldots,Y_m$ are independent draws from $\Pi$, then
\begin{equation}
\label{eq:iid-risk}
\E\bigg[\MMD_k^2 \Big(m^{-1}\sum_{i=1}^m\delta_{Y_i},\Pi\Big)\bigg]=\frac{\gamma_0}{m}.
\end{equation}
This identity is the independent benchmark.

\begin{definition}[Exact finite-sample kernel ESS]
\label{def:kernel-ess}
Assume $\gamma_0>0$ and let
\[
R_n(k)=\E[\MMD_k^2(\hat\Pi_n,\Pi)].
\]
The exact finite-sample kernel effective sample size is
\begin{equation}
\label{eq:ess-def}
\ESS_k^{(n)}=
\begin{cases}
\gamma_0/R_n(k),& R_n(k)>0,\\[0.25em]
+\infty,& R_n(k)=0.
\end{cases}
\end{equation}
When $R_n(k)>0$, $\ESS_k^{(n)}$ is the unique positive number $m$ for which the independent risk $\gamma_0/m$ equals the Markov-chain risk $R_n(k)$.
\end{definition}

The condition $\gamma_0>0$ only rules out the degenerate situation in which the centered feature map is $\Pi$-almost surely zero. The ESS is a nonnegative real number, not necessarily an integer. It may exceed $n$ under beneficial negative dependence, just as scalar ESS can exceed $n$.

\section{Population theory}
\label{sec:population}

The definition above is meaningful because the MMD risk has an exact autocovariance expansion. This section develops that expansion and its consequences. The first theorem is finite-sample and requires only stationarity. The second gives the asymptotic integrated-autocorrelation-time form. The remaining results show that the quantity is invariant under transported kernels and interpretable through covariance operators in the RKHS.

\subsection{Exact and asymptotic risk identities}

The exact risk identity is the kernel analogue of the usual variance formula for a correlated sample mean. Here the sample mean is Hilbert-valued, and the covariance sequence is collapsed to a scalar sequence through the Hilbert inner product.

\begin{theorem}[Exact finite-sample MMD risk]
\label{thm:exact-risk}
Under Assumption~\ref{ass:kernel}, for every $n\ge1$,
\begin{equation}
\label{eq:exact-risk}
R_n(k)=
\frac{1}{n^2}\left\{n\gamma_0+2\sum_{\ell=1}^{n-1}(n-\ell)\gamma_\ell\right\}.
\end{equation}
Consequently, when the denominator is positive,
\begin{equation}
\label{eq:exact-ess-formula}
\ESS_k^{(n)}=
\frac{n^2\gamma_0}{n\gamma_0+2\sum_{\ell=1}^{n-1}(n-\ell)\gamma_\ell},
\end{equation}
with the convention $\ESS_k^{(n)}=+\infty$ if the denominator is zero.
\end{theorem}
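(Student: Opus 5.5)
The plan is to start from the centered representation \eqref{eq:mmd-centred}, namely $\MMD_k^2(\hat\Pi_n,\Pi)=\|n^{-1}\sum_{t=1}^n\phi(X_t)\|_{\Hk}^2$. Expanding the squared Hilbert norm of a finite sum by bilinearity of the inner product gives the pointwise identity
\[
\MMD_k^2(\hat\Pi_n,\Pi)=\frac{1}{n^2}\sum_{s=1}^n\sum_{t=1}^n\inner{\phi(X_s)}{\phi(X_t)}_{\Hk}.
\]
This holds for every realization, so no probabilistic issue arises yet.

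The next step is to take expectations term by term, which requires each summand to be integrable. By Assumption~\ref{ass:kernel}, $\|\phi(x)\|_{\Hk}\le 2K_0^{1/2}$. Cauchy--Schwarz then bounds every inner product by $4K_0$. Measurability of $(x,y)\mapsto\inner{\phi(x)}{\phi(y)}_{\Hk}$ follows from Bochner measurability of the feature map, since the inner product is continuous on $\Hk\times\Hk$. Alternatively, one can write it explicitly as $k(x,y)-\mu_\Pi(x)-\mu_\Pi(y)+\|\mu_\Pi\|_{\Hk}^2$, using the reproducing property and the fact that $\inner{\mu_\Pi}{k(x,\cdot)}=\int k(u,x)\Pi(du)$. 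This identity holds because the Bochner integral commutes with the bounded functional $\inner{\cdot}{k(x,\cdot)}$. So the finite double sum has finite expectation, and linearity applies.

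Strict stationarity then gives $\E\inner{\phi(X_s)}{\phi(X_t)}=\gamma_{t-s}$, and symmetry gives $\gamma_{-\ell}=\gamma_\ell$. Grouping the $n^2$ pairs by lag $\ell=|t-s|$, there are $n$ pairs with $\ell=0$ and $2(n-\ell)$ pairs with lag $\ell\in\{1,\ldots,n-1\}$. This yields \eqref{eq:exact-risk}.

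Formula \eqref{eq:exact-ess-formula} is then immediate from Definition~\ref{def:kernel-ess}, since $\gamma_0/R_n(k)$ equals $n^2\gamma_0$ divided by the bracketed denominator. The denominator is $n^2R_n(k)$, and $R_n(k)\ge 0$ as the expectation of a squared norm. Hence the denominator is nonnegative, and it vanishes exactly when $R_n(k)=0$, which matches the $+\infty$ convention.

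The only step needing care is the justification of exchanging expectation with the Hilbert-space expansion, that is, the measurability and integrability of the cross terms. Boundedness of the kernel handles this, so I expect no real obstacle.
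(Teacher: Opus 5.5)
Your proposal is correct and follows the paper's proof: expand the squared norm from \eqref{eq:mmd-centred} into the double sum of inner products, use stationarity and symmetry to replace each expectation by $\gamma_{|t-s|}$, count the $n$ lag-zero pairs and the $2(n-\ell)$ pairs at each lag $\ell\ge 1$, and substitute into Definition~\ref{def:kernel-ess}. Your extra justification of integrability via the bound $4K_0$ and the observation that the denominator equals $n^2R_n(k)\ge0$ are sound refinements that the paper leaves implicit.
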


For long runs one usually wants a stable asymptotic summary. The next result gives the corresponding integrated-autocorrelation-time representation. It is stated separately because the exact finite-$n$ identity does not require any summability condition, whereas the asymptotic representation does.

\begin{assumption}[Scalar lag summability]
\label{ass:scalar-summability}
The scalar autocovariances satisfy
\[
\sum_{\ell=1}^\infty |\gamma_\ell|<\infty.
\]
\end{assumption}

\begin{theorem}[Asymptotic kernel ESS]
\label{thm:asymp-risk}
Under Assumptions~\ref{ass:kernel} and~\ref{ass:scalar-summability},
\begin{equation}
\label{eq:sigma-k}
nR_n(k)\to \sigma_k^2:=\gamma_0+2\sum_{\ell=1}^\infty\gamma_\ell.
\end{equation}
Moreover $\sigma_k^2\ge0$. If $\sigma_k^2>0$, then
\begin{equation}
\label{eq:asymp-ess}
\frac{\ESS_k^{(n)}}{n}\to \frac{\gamma_0}{\sigma_k^2}
=
\left(1+2\sum_{\ell=1}^\infty\frac{\gamma_\ell}{\gamma_0}\right)^{-1}.
\end{equation}
If $\sigma_k^2=0$, then $R_n(k)=o(n^{-1})$ and $\ESS_k^{(n)}/n\to\infty$ along all $n$ for which $R_n(k)>0$.
\end{theorem}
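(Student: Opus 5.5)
The plan is to start from the exact identity of Theorem~\ref{thm:exact-risk} and run the standard Cesàro/dominated-convergence argument for integrated autocorrelation times. Multiplying \eqref{eq:exact-risk} by $n$ gives
\[
nR_n(k)=\gamma_0+2\sum_{\ell=1}^{n-1}\Bigl(1-\frac{\ell}{n}\Bigr)\gamma_\ell
=\gamma_0+2\sum_{\ell=1}^{\infty}w_{n,\ell}\,\gamma_\ell,
\qquad w_{n,\ell}=\Bigl(1-\frac{\ell}{n}\Bigr)_+ .
\]
For each fixed $\ell$ we have $w_{n,\ell}\to1$, and $|w_{n,\ell}\gamma_\ell|\le|\gamma_\ell|$. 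The sequence $|\gamma_\ell|$ is summable by Assumption~\ref{ass:scalar-summability}. Dominated convergence for series therefore yields $nR_n(k)\to\sigma_k^2$, and $\sigma_k^2$ is finite for the same reason.

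Nonnegativity of $\sigma_k^2$ is then immediate. By \eqref{eq:mmd-centred}, $R_n(k)=\E\|n^{-1}\sum_{t}\phi(X_t)\|_{\Hk}^2\ge0$, so every $nR_n(k)$ is nonnegative. The limit $\sigma_k^2$ is therefore nonnegative as well. Using this route, there is no need to argue via a spectral density.

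For the ESS limit with $\sigma_k^2>0$, note that $nR_n(k)>0$ for all large $n$. For those $n$, \eqref{eq:ess-def} gives $\ESS_k^{(n)}/n=\gamma_0/(nR_n(k))$. By continuity of $x\mapsto\gamma_0/x$ at $\sigma_k^2>0$, this converges to $\gamma_0/\sigma_k^2$. Dividing numerator and denominator by $\gamma_0>0$ gives the second form in \eqref{eq:asymp-ess}.

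The degenerate case $\sigma_k^2=0$ follows from the same convergence. Here $nR_n(k)\to0$, which is exactly $R_n(k)=o(n^{-1})$. Along indices with $R_n(k)>0$, we get $\ESS_k^{(n)}/n=\gamma_0/(nR_n(k))\to+\infty$, since $\gamma_0>0$ is fixed. The indices with $R_n(k)=0$ have $\ESS_k^{(n)}=+\infty$ by convention and are excluded by the statement.

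The only step requiring any care is the interchange of limit and infinite sum. Absolute summability makes this routine, so I do not expect a real obstacle. The one subtlety is to note that the asymptotic statement is about eventual positivity: finitely many $n$ with $R_n(k)=0$ are harmless.
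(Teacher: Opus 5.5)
Your proposal is correct and follows the paper's own proof: multiply the exact identity by $n$, apply dominated convergence using $|1-\ell/n|\le 1$ and absolute summability of the $\gamma_\ell$, get $\sigma_k^2\ge0$ from $nR_n(k)\ge0$, and write $\ESS_k^{(n)}/n=\gamma_0/(nR_n(k))$ in both the positive and degenerate cases. Your observation that $nR_n(k)>0$ for all large $n$ when $\sigma_k^2>0$ is a small step the paper leaves implicit.
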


Thus the kernel integrated autocorrelation time is
\[
\tau_k=\frac{\sigma_k^2}{\gamma_0}=1+2\sum_{\ell=1}^\infty\frac{\gamma_\ell}{\gamma_0},
\]
whenever $\sigma_k^2>0$. The asymptotic kernel ESS is $n/\tau_k$.

\subsection{Invariance under transported kernels}

The preceding definitions become intrinsic only when the kernel is. The cleanest way to state invariance is through transported kernels. This covers changes of chart, isometries, and equivalent representations.

\begin{theorem}[Transported-kernel invariance]
\label{thm:transport}
Let $(M_1,\mathscr B_1)$ and $(M_2,\mathscr B_2)$ be measurable spaces, and let $T:M_1\to M_2$ be measurable. Suppose $k_1$ and $k_2$ satisfy Assumption~\ref{ass:kernel} on $M_1$ and $M_2$ and
\begin{equation}
\label{eq:transport-kernel}
k_2(Tx,Ty)=k_1(x,y),\qquad x,y\in M_1.
\end{equation}
Let $X_t$ be stationary with law $\Pi$ on $M_1$, and set $Y_t=T(X_t)$, whose marginal law is $T_\#\Pi$. Then the finite-sample MMD risks for $(X_t,k_1,\Pi)$ and $(Y_t,k_2,T_\#\Pi)$ are equal for every $n$, and so are the corresponding exact kernel ESS values whenever they are defined.
\end{theorem}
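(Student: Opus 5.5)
The plan is to express every relevant quantity for the pair $(Y_t,k_2,T_\#\Pi)$ through kernel evaluations. Each such evaluation is then transported back to $M_1$ using \eqref{eq:transport-kernel} and the change-of-variables formula for pushforward measures. The key observation is that the inner product of centered features can be written purely in terms of $k$. For a stationary process with marginal $\Pi$, the reproducing property and the Bochner integral commuting with bounded linear functionals give
\[
\inner{\phi(x)}{\phi(y)}_{\Hk} = k(x,y) - \int k(x,u)\,\Pi(du) - \int k(u,y)\,\Pi(du) + \iint k(u,v)\,\Pi(du)\Pi(dv).
\]
Each term is finite, since $|k|\le K_0$.

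I will first apply this formula on $M_2$ with $\Pi_2=T_\#\Pi$ and $x=Tx'$, $y=Ty'$. Change of variables gives $\int_{M_2} k_2(Tx',w)\,(T_\#\Pi)(dw)=\int_{M_1}k_2(Tx',Tu)\,\Pi(du)=\int_{M_1}k_1(x',u)\,\Pi(du)$. This step needs $u\mapsto k_2(Tx',Tu)$ to be measurable, which holds because $k_2(z,\cdot)$ is measurable (it is a continuous functional of the Bochner-measurable feature map) and $T$ is measurable. The double integral transports in the same way. This yields
\[
\inner{\phi_2(Tx')}{\phi_2(Ty')}_{\mathcal H_{k_2}}=\inner{\phi_1(x')}{\phi_1(y')}_{\mathcal H_{k_1}}\quad\text{for all } x',y'\in M_1.
\]

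Next, I will use the expansion of \eqref{eq:mmd-centred}:
\[
\MMD_{k}^2(\hat\Pi_n,\Pi)=n^{-2}\sum_{s,t=1}^n\inner{\phi(X_s)}{\phi(X_t)}.
\]
Substituting $Y_t=T(X_t)$ and applying the pointwise identity above, the MMD for the $Y$ path equals the MMD for the $X$ path pathwise, not merely in expectation. It follows that $R_n$ agrees for the two systems. Equally, $\gamma_\ell$ agrees for every $\ell$, so Theorem~\ref{thm:exact-risk} also gives equality of the risks. The process $Y_t$ is strictly stationary with marginal $T_\#\Pi$, since it is a measurable function of a stationary process applied coordinatewise. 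Hence $\gamma_0^{(2)}=\gamma_0^{(1)}$, the positivity condition is shared, and Definition~\ref{def:kernel-ess} gives the same ESS, including the $+\infty$ case.

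The main obstacle is the measure-theoretic bookkeeping rather than any conceptual difficulty. Two points need care. The first is the justification of the expression for $\inner{\phi(x)}{\phi(y)}$, which requires moving the inner product inside the Bochner integral defining $\mu_\Pi$. The second is the measurability needed for change of variables. I would handle both by noting that $h\mapsto\inner{h}{g}$ is continuous and linear, so it commutes with Bochner integration. I would also note that $w\mapsto k_2(z,w)=\inner{k_2(w,\cdot)}{k_2(z,\cdot)}$ is measurable as the composition of a continuous functional with the measurable feature map.
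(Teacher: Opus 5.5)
Your proposal is correct and follows essentially the same route as the paper. Both arguments write the squared MMD through kernel evaluations, transport each term using $k_2(Tx,Ty)=k_1(x,y)$ and the change-of-variables formula for $T_\#\Pi$, obtain equality pathwise, and then take expectations. The only difference is cosmetic: you transport the centered kernel $\inner{\phi(x)}{\phi(y)}_{\Hk}$ pointwise, which gives equality of every $\gamma_\ell$ at once, whereas the paper transports the three-term MMD expansion and recovers equality of $\gamma_0$ from $R_1(k)=\gamma_0$. Your measurability remarks are a useful addition that the paper leaves implicit.
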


If $M_1=M_2=M$ is a Riemannian manifold and $T$ is an isometry, then any kernel of the form $k(x,y)=\psi(d_g(x,y))$, provided it is positive definite, satisfies \eqref{eq:transport-kernel}. More generally, a chart change preserves the ESS whenever the kernel is specified intrinsically rather than through chart coordinates. This is the sense in which the kernel ESS is coordinate-free.

\subsection{Characteristic kernels and what the ESS measures}

MMD is always a nonnegative discrepancy, but it is not always a metric on probability laws. This distinction matters for interpretation. A non-characteristic kernel can still produce a useful ESS, but it measures accuracy only for the features seen by that kernel.

\begin{proposition}[Metric versus pseudometric interpretation]
\label{prop:characteristic}
Let $\mathcal P$ be a class of probability measures on $M$ for which the kernel mean embedding is defined. If $k$ is characteristic on $\mathcal P$, then $\MMD_k$ is a metric on $\mathcal P$. In that case $\ESS_k^{(n)}$ is the effective independent sample size for estimating $\Pi$ under a genuine distributional metric. If $k$ is not characteristic, $\MMD_k$ is a pseudometric, and $\ESS_k^{(n)}$ remains well-defined but measures effective sample size only for the corresponding kernel features.
\end{proposition}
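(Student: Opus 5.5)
The argument rests on the observation that $\MMD_k$ is, by construction, the pullback of the Hilbert norm of $\Hk$ under the mean-embedding map
\[
P\mapsto\mu_P=\int_M k(x,\cdot)\,P(dx),\qquad P\in\mathcal P.
\]
That is, $\MMD_k(P,Q)=\|\mu_P-\mu_Q\|_{\Hk}$. So the plan is to read off the pseudometric axioms from the norm axioms, and then add the identity of indiscernibles exactly when the embedding is injective, which is the definition of $k$ being characteristic on $\mathcal P$. The statements about $\ESS_k^{(n)}$ are then interpretive consequences of Definition~\ref{def:kernel-ess} and Theorem~\ref{thm:exact-risk}.

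\textbf{Pseudometric properties.} First I will check well-definedness. For $P\in\mathcal P$, Bochner integrability of the feature map under Assumption~\ref{ass:kernel} (or membership in $\mathcal P$) places $\mu_P$ in $\Hk$. The properties below then follow from the norm on $\Hk$:
\begin{itemize}
\item nonnegativity: $\MMD_k(P,Q)\ge0$;
\item symmetry: $\|\mu_P-\mu_Q\|_{\Hk}=\|\mu_Q-\mu_P\|_{\Hk}$;
\item vanishing on the diagonal: $\MMD_k(P,P)=0$;
\item triangle inequality: from $\|\mu_P-\mu_R\|_{\Hk}\le\|\mu_P-\mu_Q\|_{\Hk}+\|\mu_Q-\mu_R\|_{\Hk}$.
\end{itemize}
Hence $\MMD_k$ is always a pseudometric on $\mathcal P$.

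\textbf{Metric when characteristic.} If $k$ is characteristic, $P\mapsto\mu_P$ is injective on $\mathcal P$. Then $\MMD_k(P,Q)=0$ gives $\mu_P=\mu_Q$, hence $P=Q$, so $\MMD_k$ is a metric.

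\textbf{Non-characteristic case.} Conversely, if $k$ is not characteristic, there exist $P\ne Q$ with $\mu_P=\mu_Q$, so $\MMD_k$ is a pseudometric and not a metric. To make ``kernel features'' precise, I will use the reproducing property,
\[
\MMD_k(P,Q)=\sup_{\|f\|_{\Hk}\le1}\left|\int f\,dP-\int f\,dQ\right|.
\]
This follows from $\int f\,dP=\inner{f}{\mu_P}_{\Hk}$, which holds by exchanging the Bochner integral with the bounded linear functional $\inner{f}{\cdot}_{\Hk}$, together with Cauchy--Schwarz. It shows that $\MMD_k$ detects discrepancies exactly through integrals of functions in $\Hk$.

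\textbf{The ESS statements.} These follow from the definitions. $R_n(k)$ is the expected squared $\MMD_k$ between $\hat\Pi_n$ and $\Pi$. The benchmark \eqref{eq:iid-risk} requires only $\gamma_0>0$, and that quantity does not depend on whether $k$ is characteristic. Therefore $\ESS_k^{(n)}$ is well defined by Definition~\ref{def:kernel-ess} and Theorem~\ref{thm:exact-risk} in either case. Its interpretation is inherited from the loss: a genuine distributional metric when $k$ is characteristic, and uniform accuracy over the unit ball of $\Hk$ otherwise.

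\textbf{Main obstacle.} There is no real analytic difficulty. The only delicate point is that $\hat\Pi_n$ must lie in $\mathcal P$ for the metric statement to apply to it. Finitely supported measures always have embeddings under bounded kernels, so I would take $\mathcal P$ to include them, or simply note that $\mathcal P$ contains all probability measures when $K_0<\infty$.
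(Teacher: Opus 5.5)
Your proposal is correct and takes essentially the same route as the paper. Both proofs write $\MMD_k$ as the $\Hk$-norm of the difference of mean embeddings, read off the pseudometric axioms from the norm, and obtain separation exactly from injectivity of the embedding, which is the definition of characteristic. Your additions are sound refinements that the paper's short proof leaves implicit: the witness-function form $\MMD_k(P,Q)=\sup_{\|f\|_{\Hk}\le1}|\int f\,dP-\int f\,dQ|$, which makes ``kernel features'' precise, and the remark that $\hat\Pi_n$ and $\Pi$ must lie in $\mathcal P$, which is automatic when $K_0<\infty$.
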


Thus the paper does not claim that a single scalar number captures every possible notion of Monte Carlo accuracy. The choice of kernel specifies the discrepancy. Reporting kernel ESS over a small family of intrinsic kernels or bandwidths can give a multiscale view of the chain's distributional efficiency.

\subsection{Operator and harmonic-mean representations}

The scalar sequence $\gamma_\ell$ is obtained by taking traces of Hilbert-space covariance operators. The operator view clarifies the relationship between kernel ESS and ordinary scalar ESS. It shows that kernel ESS averages scalar ESS values across the principal directions of the target's feature covariance, with weights determined by the amount of target variation in each direction.

Let $\calH_0$ be the closed linear span of $\{\phi(x):x\in M\}$ in $\Hk$. We regard all operators below as acting on $\calH_0$. This avoids irrelevant nonseparable components of the ambient RKHS. In all compact manifold examples with continuous kernels, $\calH_0$ is separable.

For $u,v\in\calH_0$, write $u\otimes v$ for the rank-one operator
\[
(u\otimes v)(h)=u\inner{v}{h}_{\Hk}.
\]
Define
\[
C=\E\{\phi(X_0)\otimes\phi(X_0)\},\qquad
\Gamma_\ell=\E\{\phi(X_0)\otimes\phi(X_\ell)\}.
\]
Because $\phi$ is bounded, $C$ and each $\Gamma_\ell$ are trace-class. The next result uses a stronger summability assumption than Theorem~\ref{thm:asymp-risk}; it is imposed only for the operator representation.

\begin{assumption}[Trace-norm summability]
\label{ass:trace-sum}
\[
\sum_{\ell\in\mathbb Z}\|\Gamma_\ell\|_1<\infty,
\]
where $\|\cdot\|_1$ denotes the trace norm.
\end{assumption}

\begin{proposition}[Operator trace representation]
\label{prop:operator}
Under Assumptions~\ref{ass:kernel} and~\ref{ass:trace-sum},
\[
S:=\sum_{\ell\in\mathbb Z}\Gamma_\ell
=C+\sum_{\ell=1}^\infty(\Gamma_\ell+\Gamma_\ell^*)
\]
converges absolutely in trace norm. Moreover
\[
\tr(C)=\gamma_0,
\qquad
\tr(S)=\sigma_k^2,
\]
and, when $\sigma_k^2>0$,
\[
\ESS_k^{(n)}\sim n\frac{\tr(C)}{\tr(S)}.
\]
\end{proposition}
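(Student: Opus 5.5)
The plan is to prove Proposition~\ref{prop:operator} in three steps: identify the adjoint of $\Gamma_{-\ell}$, compute traces of rank-one operators, and then invoke Theorem~\ref{thm:asymp-risk}.

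\textbf{Step 1: absolute convergence and the symmetric form of $S$.} Each $\Gamma_\ell$ is defined as a Bochner integral in the Banach space of trace-class operators on $\calH_0$. The integrand $\phi(X_0)\otimes\phi(X_\ell)$ is a rank-one operator with $\|\phi(X_0)\otimes\phi(X_\ell)\|_1=\|\phi(X_0)\|\,\|\phi(X_\ell)\|\le 4K_0$, so the integral is well defined. Assumption~\ref{ass:trace-sum} says the series $\sum_{\ell}\Gamma_\ell$ is absolutely summable in trace norm. Since the trace class is complete, the series converges, and it may be regrouped freely. Next I would show $\Gamma_{-\ell}=\Gamma_\ell^*$. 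Stationarity gives
\[
\Gamma_{-\ell}=\E\{\phi(X_0)\otimes\phi(X_{-\ell})\}=\E\{\phi(X_\ell)\otimes\phi(X_0)\},
\]
and $(u\otimes v)^*=v\otimes u$. Taking adjoints commutes with the Bochner expectation, because the adjoint is an isometric, conjugate-linear map on the trace class. Grouping $\ell$ with $-\ell$ then yields $S=C+\sum_{\ell\ge1}(\Gamma_\ell+\Gamma_\ell^*)$, with $C=\Gamma_0$.

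\textbf{Step 2: the trace identities.} The trace is a bounded linear functional on the trace class with $|\tr A|\le\|A\|_1$. It therefore commutes with the Bochner expectation and with the absolutely convergent series. For rank-one operators, $\tr(u\otimes v)=\inner{u}{v}_{\Hk}$; one checks this by expanding in an orthonormal basis of the separable space $\calH_0$. It follows that $\tr\Gamma_\ell=\E\inner{\phi(X_0)}{\phi(X_\ell)}=\gamma_\ell$, so $\tr C=\gamma_0$. Using $\tr(A^*)=\overline{\tr A}=\tr A$ in the real setting, we get
\[
\tr S=\gamma_0+2\sum_{\ell\ge1}\gamma_\ell=\sigma_k^2.
\]
Along the way, $|\gamma_\ell|\le\|\Gamma_\ell\|_1$ shows that Assumption~\ref{ass:trace-sum} implies Assumption~\ref{ass:scalar-summability}.

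\textbf{Step 3: the asymptotic ESS.} Since Assumption~\ref{ass:scalar-summability} holds, Theorem~\ref{thm:asymp-risk} applies. When $\sigma_k^2>0$ it gives $\ESS_k^{(n)}/n\to\gamma_0/\sigma_k^2=\tr(C)/\tr(S)$, which is exactly the stated asymptotic equivalence $\ESS_k^{(n)}\sim n\,\tr(C)/\tr(S)$.

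The main obstacle is the measure-theoretic justification that $\Gamma_\ell$ is a Bochner integral in the trace class. This requires measurability of $\omega\mapsto\phi(X_0)\otimes\phi(X_\ell)$ into the trace-class operators. I would derive it from the Bochner measurability of $\phi$ in Assumption~\ref{ass:kernel}. The map $(u,v)\mapsto u\otimes v$ is continuous from $\calH_0\times\calH_0$ into the trace class, and a continuous image of a strongly measurable pair is strongly measurable. Once this is in place, everything else is linearity and continuity of the trace.
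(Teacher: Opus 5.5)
Your proposal is correct and follows essentially the same argument as the paper: trace-class Bochner integrals, $\Gamma_{-\ell}=\Gamma_\ell^*$ by stationarity, continuity of the trace giving $\tr\Gamma_\ell=\gamma_\ell$, and an appeal to Theorem~\ref{thm:asymp-risk}. Your remark that $|\gamma_\ell|\le\|\Gamma_\ell\|_1$ shows Assumption~\ref{ass:trace-sum} implies Assumption~\ref{ass:scalar-summability}, and your measurability argument for $\phi(X_0)\otimes\phi(X_\ell)$, make explicit two points that the paper's proof leaves implicit.
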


Let $C e_j=\lambda_j e_j$ be an orthonormal eigen-decomposition on $\calH_0$, with $\lambda_j\ge0$. Define scalar processes
\[
Y_{j,t}=\inner{\phi(X_t)}{e_j}_{\Hk}.
\]
If $\lambda_j>0$, define the scalar long-run variance
\[
\sigma_j^2=\sum_{\ell\in\mathbb Z}\Cov(Y_{j,0},Y_{j,\ell})
\]
and the corresponding asymptotic scalar ESS for a length-$n$ run by
\[
\ESS_{j,n}=n\lambda_j/\sigma_j^2
\]
when $\sigma_j^2>0$, with $\ESS_{j,n}=+\infty$ when $\sigma_j^2=0$.

\begin{corollary}[Variance-weighted harmonic mean]
\label{cor:harmonic}
Under the assumptions of Proposition~\ref{prop:operator}, if $\sigma_k^2>0$, then
\begin{equation}
\label{eq:harmonic}
\ESS_k^{(n)}
\sim
\frac{\sum_j\lambda_j}{\sum_{j:\lambda_j>0}\lambda_j/\ESS_{j,n}},
\end{equation}
where terms with $\ESS_{j,n}=+\infty$ contribute zero to the denominator.
\end{corollary}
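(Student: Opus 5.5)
The plan is to reduce the corollary to Proposition~\ref{prop:operator} by computing $\tr(S)$ in the eigenbasis $\{e_j\}$ of $C$. Since $S$ is trace-class on the separable space $\calH_0$, its trace may be evaluated in any orthonormal basis, so
\[
\tr(S)=\sum_j \inner{S e_j}{e_j}_{\Hk}.
\]
The proposition already gives $\ESS_k^{(n)}\sim n\,\tr(C)/\tr(S)$ with $\tr(C)=\sum_j\lambda_j=\gamma_0$. It therefore suffices to show $\tr(S)=\sum_{j:\lambda_j>0}\sigma_j^2$, and then to rewrite each $\sigma_j^2$ as $n\lambda_j/\ESS_{j,n}$.

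\textbf{Step 1: diagonal entries of $S$.} For each $j$ and $\ell$, compute
\[
\inner{\Gamma_\ell e_j}{e_j}=\E\bigl[\inner{\phi(X_0)}{e_j}\inner{\phi(X_\ell)}{e_j}\bigr]=\E[Y_{j,0}Y_{j,\ell}].
\]
This is $\Cov(Y_{j,0},Y_{j,\ell})$, because $\E Y_{j,t}=\inner{\E\phi(X_t)}{e_j}=0$; pulling the bounded functional inside the Bochner integral is legitimate. Summing over $\ell\in\mathbb Z$ gives $\inner{Se_j}{e_j}=\sigma_j^2$. Trace-norm convergence of $\sum_\ell\Gamma_\ell$ implies weak-operator convergence, so the sum over $\ell$ may be interchanged with the inner product. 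This also shows that $\sigma_j^2$ is well defined as an absolutely convergent series, since $|\inner{\Gamma_\ell e_j}{e_j}|\le\|\Gamma_\ell\|_{\mathrm{op}}\le\|\Gamma_\ell\|_1$.

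\textbf{Step 2: null directions contribute nothing.} If $\lambda_j=0$, then $\E Y_{j,0}^2=\inner{Ce_j}{e_j}=0$, so $Y_{j,t}=0$ almost surely for every $t$ by stationarity. Hence every $\inner{\Gamma_\ell e_j}{e_j}$ vanishes and $\inner{Se_j}{e_j}=0$. It follows that
\[
\tr(S)=\sum_{j:\lambda_j>0}\sigma_j^2=\sum_{j:\lambda_j>0}\frac{n\lambda_j}{\ESS_{j,n}}.
\]
Here the convention $\lambda_j/\ESS_{j,n}=0$ when $\sigma_j^2=0$ is exactly consistent with the definition of $\ESS_{j,n}$. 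Note that $\sigma_j^2\ge0$, being a scalar long-run variance; for example, it is the limit of $n\,\Var$ of the mean of $Y_{j,t}$ under absolute summability. So the series consists of nonnegative terms and can be rearranged freely.

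\textbf{Step 3: conclusion and main obstacle.} Substituting into Proposition~\ref{prop:operator} gives
\[
\ESS_k^{(n)}\sim \frac{n\sum_j\lambda_j}{\sum_{j:\lambda_j>0} n\lambda_j/\ESS_{j,n}}.
\]
The factors of $n$ cancel, which yields \eqref{eq:harmonic}; the denominator is $\sigma_k^2>0$, so the ratio is well defined. The only delicate point is the justification in Step 1: that the trace of the infinite operator sum equals the sum of the termwise diagonal series. I would handle it by appeal to trace-norm absolute convergence, using $|\tr(A)|\le\|A\|_1$ and continuity of $\tr$ in trace norm, together with the basis-independence of the trace for trace-class operators.
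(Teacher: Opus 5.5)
Your proposal is correct and follows the paper's proof essentially step for step. Like the paper, you evaluate $\tr(S)$ in the eigenbasis of $C$, identify $\inner{Se_j}{e_j}$ with $\sigma_j^2$, discard the $\lambda_j=0$ directions by stationarity, and substitute $\sigma_j^2=n\lambda_j/\ESS_{j,n}$ into $\ESS_k^{(n)}\sim n\,\tr(C)/\tr(S)$, while additionally spelling out two justifications the paper leaves implicit: the interchange of $\sum_\ell$ with the inner product via $|\inner{\Gamma_\ell e_j}{e_j}|\le\|\Gamma_\ell\|_1$, and the nonnegativity of $\sigma_j^2$.
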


The off-diagonal long-run cross-covariances between different $Y_j$'s do not appear in \eqref{eq:harmonic} because MMD risk is the trace of the long-run covariance operator. The trace keeps only the total variance across an orthonormal basis. Thus the kernel ESS is not an arbitrary scalarization: it is the scalarization dictated by squared RKHS norm loss.

\section{Estimation and precision assessment}
\label{sec:estimation}

The population definition uses $\Pi$ through the centered feature map and uses the infinite lag sum through $\sigma_k^2$. This section gives a practical estimator and a consistency result, the former of which is a kernel version of a spectral variance estimator. It uses only the Gram matrix of the stored sample and therefore does not require explicit coordinates in a vector space.

\subsection{A Gram-matrix lag-window estimator}

Let $K$ be the $n\times n$ Gram matrix with entries $K_{st}=k(X_s,X_t)$. Denote by
\[
H_n=I_n-n^{-1}\mathbf 1_n\mathbf 1_n^T
\]
the centering matrix and define the empirically centered Gram matrix
\[
\widetilde K=H_n K H_n.
\]
Equivalently, if
\[
\bar\Phi_n=\frac1n\sum_{t=1}^n k(X_t,\cdot),
\qquad
\hat\phi_n(x)=k(x,\cdot)-\bar\Phi_n,
\]
then
\[
\widetilde K_{st}=\inner{\hat\phi_n(X_s)}{\hat\phi_n(X_t)}_{\Hk}.
\]
For $0\le \ell<n$, define
\begin{equation}
\label{eq:gamma-hat}
\hat\gamma_\ell=\frac{1}{n-\ell}\sum_{t=1}^{n-\ell}\widetilde K_{t,t+\ell}.
\end{equation}
Given a bandwidth $b_n<n$ and a bounded lag window $w$, define
\begin{equation}
\label{eq:sigma-hat}
\hat\sigma_k^2
=\hat\gamma_0+2\sum_{\ell=1}^{b_n}w\!\left(\frac{\ell}{b_n+1}\right)\hat\gamma_\ell.
\end{equation}
For example, Bartlett weights use $w(u)=(1-u)\mathbf 1(0\le u\le1)$. The estimated kernel ESS is
\begin{equation}
\label{eq:ess-hat}
\widehat\ESS_k
=n\frac{\hat\gamma_0}{\hat\sigma_k^2},
\end{equation}
provided $\hat\sigma_k^2>0$. A nonpositive $\hat\sigma_k^2$ should be reported as an unstable long-run variance estimate or replaced by a positive semidefinite spectral estimator; the population quantity is nonnegative, but finite-sample lag-window estimates need not be.

The estimator above is fully kernelized. It uses the centered Gram matrix rather than coordinates or tangent-space projections. The centering is empirical because $\mu_\Pi$ is unknown, just as ordinary autocovariance estimators replace population means by sample means.

\subsection{Consistency under absolute regularity}

To state a concise consistency theorem, we use an explicit mixing condition. This condition is stronger than strictly necessary, but it keeps the theorem transparent and is satisfied by many geometrically ergodic Markov chains on compact state spaces.

For two $\sigma$-fields $\mathcal A$ and $\mathcal C$, let
\[
\beta(\mathcal A,\mathcal C)
=\frac12\sup\sum_{i,j}|\Prob(A_i\cap C_j)-\Prob(A_i)\Prob(C_j)|,
\]
where the supremum is over finite measurable partitions $\{A_i\}$ of $\mathcal A$ and $\{C_j\}$ of $\mathcal C$. Define the absolute-regularity coefficients of the stationary process by
\[
\beta(m)=\sup_{r\in\mathbb Z}\beta\{\sigma(X_t:t\le r),\sigma(X_t:t\ge r+m)\}.
\]

\begin{assumption}[Dependence and bandwidth]
\label{ass:estimation}
The stationary process is absolutely regular with
\[
\sum_{m=0}^\infty \beta(m)<\infty.
\]
The lag window $w$ is bounded, $w(0)=1$, and $w(u)\to1$ as $u\downarrow0$. The bandwidth satisfies
\[
b_n\to\infty,
\qquad
\frac{b_n^3}{n}\to0.
\]
\end{assumption}

The condition $b_n^3/n\to0$ is convenient rather than sharp. It compensates for the fact that the lag-$\ell$ product sequence involves pairs $(X_t,X_{t+\ell})$, so its variance bound grows at most linearly in $\ell$. This sufficient condition is used only for the consistency proof.  In the finite-sample experiments below we use the common Bartlett-window heuristic \(b=\lfloor n^{1/3}\rfloor\).

\begin{theorem}[Consistency of the kernel ESS estimator]
\label{thm:estimator-consistency}
Assume Assumptions~\ref{ass:kernel} and~\ref{ass:estimation}. Then
\[
\hat\gamma_0\toP \gamma_0,
\qquad
\hat\sigma_k^2\toP \sigma_k^2.
\]
If $\gamma_0>0$ and $\sigma_k^2>0$, then $\Prob(\hat\sigma_k^2>0)\to1$ and
\begin{equation}
\label{eq:ess-consistency}
\frac{\widehat\ESS_k}{n}\toP \frac{\gamma_0}{\sigma_k^2}.
\end{equation}
Equivalently, $\widehat\ESS_k/\ESS_k^{\mathrm{asy}}\toP1$, where $\ESS_k^{\mathrm{asy}}=n\gamma_0/\sigma_k^2$.
\end{theorem}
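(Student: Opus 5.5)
The plan is to split each $\hat\gamma_\ell$ into an oracle part built from the population-centered features $\phi(X_t)$ and a remainder caused by empirical centering. Write $\bar\phi_n=n^{-1}\sum_{t=1}^n\phi(X_t)$. Since $\bar\Phi_n-\mu_\Pi=\bar\phi_n$, we have $\hat\phi_n(X_t)=\phi(X_t)-\bar\phi_n$, and hence
\[
\widetilde K_{t,t+\ell}=\inner{\phi(X_t)}{\phi(X_{t+\ell})}_{\Hk}-\inner{\phi(X_t)+\phi(X_{t+\ell})}{\bar\phi_n}_{\Hk}+\|\bar\phi_n\|_{\Hk}^2 .
\]
Call $\tilde\gamma_\ell=(n-\ell)^{-1}\sum_{t=1}^{n-\ell}\inner{\phi(X_t)}{\phi(X_{t+\ell})}_{\Hk}$ the oracle estimator. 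By boundedness, $\|\phi\|\le 2K_0^{1/2}$, so the remainder satisfies $|\hat\gamma_\ell-\tilde\gamma_\ell|\le 8K_0^{1/2}\|\bar\phi_n\|_{\Hk}+\|\bar\phi_n\|_{\Hk}^2$, uniformly in $\ell$.

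Theorem~\ref{thm:exact-risk} gives $\E\|\bar\phi_n\|^2=R_n(k)$. Under absolute regularity, $|\gamma_\ell|\le 16K_0\beta(\ell)$ by the standard covariance inequality for bounded variables; I would apply it to the Hilbert-valued pair through the Bochner approximation by simple functions, with $\alpha\le\beta$. So Assumption~\ref{ass:scalar-summability} holds, and Theorem~\ref{thm:asymp-risk} yields $R_n(k)=O(1/n)$, i.e.\ $\|\bar\phi_n\|=O_P(n^{-1/2})$. The centering contribution to $\hat\sigma_k^2$ is then at most $(1+2\|w\|_\infty b_n)\,O_P(n^{-1/2})=O_P(b_n n^{-1/2})\to0$, since $b_n^3/n\to0$. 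This also gives $\hat\gamma_0-\tilde\gamma_0\toP0$.

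For the oracle part I would use a bias–variance split. The bias is
\[
\E\tilde\sigma^2-\sigma_k^2=2\sum_{\ell=1}^{b_n}\{w(\ell/(b_n+1))-1\}\gamma_\ell-2\sum_{\ell>b_n}\gamma_\ell .
\]
The second sum vanishes because the $\gamma_\ell$ are summable. The first vanishes by dominated convergence: $w$ is bounded, $\sum|\gamma_\ell|<\infty$, and $w(\ell/(b_n+1))\to1$ for each fixed $\ell$.

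For the variance I would show $\Var(\tilde\gamma_\ell)\le C(\ell+1)/(n-\ell)$ uniformly in $\ell\le b_n$. Set $Z_t=\inner{\phi(X_t)}{\phi(X_{t+\ell})}$, which is bounded by $4K_0$ and measurable with respect to $\sigma(X_t,X_{t+\ell})$. For $|s-t|>\ell$, $\Cov(Z_s,Z_t)\le 32K_0^2\,\beta(|s-t|-\ell)$, because the two blocks are separated by a gap of $|s-t|-\ell$. For $|s-t|\le\ell$, bound the covariance trivially by $16K_0^2$. Summing over the pairs gives $n^{-1}\{(2\ell+1)+2\sum_m\beta(m)\}$ up to constants. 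Minkowski's inequality then gives
\[
\bigl\|\textstyle\sum_{\ell\le b_n}w\,(\tilde\gamma_\ell-\E\tilde\gamma_\ell)\bigr\|_{L^2}\lesssim\sum_{\ell\le b_n}\sqrt{\ell/n}\asymp b_n^{3/2}n^{-1/2}\to0 .
\]
This is exactly where $b_n^3/n\to0$ is used.

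Combining the oracle and centering parts gives $\hat\sigma_k^2\toP\sigma_k^2$, and $\hat\gamma_0\toP\gamma_0$ follows as the case $\ell=0$. If $\sigma_k^2>0$, then $\Prob(\hat\sigma_k^2>\sigma_k^2/2)\to1$, and the continuous mapping theorem gives \eqref{eq:ess-consistency}. The main obstacle is the covariance bound for the lag-$\ell$ products: it needs the $\beta$-mixing inequality applied to functions of two separated pairs. I would handle it through Berbee's coupling, or through the inequality $|\Cov(f,g)|\le 4\|f\|_\infty\|g\|_\infty\beta$, which for absolute regularity holds for any bounded functions measurable with respect to the past and future $\sigma$-fields.
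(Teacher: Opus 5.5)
Your proposal is correct and follows essentially the same route as the paper. Both arguments use the oracle/empirical-centering split with a uniform-in-$\ell$ remainder bound and $\|\bar\phi_n\|=O_P(n^{-1/2})$ from the exact risk identity, the $C(\ell+1)/n$ variance bound for lag-$\ell$ products summed to $O(b_n^{3/2}n^{-1/2})$, and window-bias and truncation terms controlled by dominated convergence and summability of $\gamma_\ell$. One small simplification is available: for $|\gamma_\ell|\le C\beta(\ell)$ you do not need any $\alpha$-type Hilbert-space covariance inequality. The function $\inner{\phi(x)}{\phi(y)}_{\Hk}$ is bounded and jointly measurable, and $\beta(\ell)$ directly bounds the total variation between the law of $(X_0,X_\ell)$ and $\Pi\otimes\Pi$; this is how the paper argues, and it gives the bound immediately.
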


To sketch, the proof proceeds in three steps. First, absolute regularity and boundedness imply summability of $\gamma_\ell$ and variance bounds for the oracle lag estimators. Second, empirical centering is asymptotically negligible because $\|n^{-1}\sum_t\phi(X_t)\|_{\Hk}=O_p(n^{-1/2})$. Third, truncation and windowing are controlled by absolute summability of the population lags.

\subsection{A kernel-MMD precision rule}

The estimator also gives a direct precision interpretation. Under Theorem~\ref{thm:asymp-risk},
\[
\E[\MMD_k^2(\hat\Pi_n,\Pi)]\approx \frac{\sigma_k^2}{n}.
\]
Thus a mean-square distributional tolerance $\varepsilon^2$ can be encoded by the criterion
\begin{equation}
\label{eq:precision-risk}
\frac{\hat\sigma_k^2}{n}\le \varepsilon^2.
\end{equation}
Equivalently, using \eqref{eq:ess-hat}, this can be written approximately as
\begin{equation}
\label{eq:precision-ess}
\widehat\ESS_k\ge \frac{\hat\gamma_0}{\varepsilon^2}.
\end{equation}
For a family of kernels $k_1,\ldots,k_J$, a multiscale rule can require \eqref{eq:precision-risk} to hold for every $j$, or can report the worst standardized risk $\max_j \hat\sigma_{k_j}^2/(n\varepsilon_j^2)$. This makes explicit the dependence of the stopping rule on the chosen distributional loss and its scale.

\section{Choosing valid kernels on manifolds}
\label{sec:kernels}

The preceding theory applies to any bounded positive-definite kernel. On a manifold, however, constructing such kernels requires care.  The goal is not merely to write a function of a distance, but to ensure positive definiteness.  We review two safe construction principles: Schoenberg-type harmonic expansions on spheres and pullbacks of Euclidean kernels through embeddings or diffeomorphisms.

\subsection{The geodesic Gaussian pitfall}

A tempting choice is
\[
k_h(x,y)=\exp\{-d_g(x,y)^2/h^2\},
\]
for some bandwidth $h > 0$. This is valid in Euclidean space, but it is not generally valid on curved manifolds. Positive definiteness of this geodesic Gaussian for all bandwidths imposes a flatness condition \citep{feragen_2015_GeodesicExponentialKernels}. Thus $k_h$ should not be used on a curved manifold unless positive definiteness has been verified for that manifold and bandwidth. This warning is central: the kernel ESS is mathematically meaningful only if $k$ is positive definite.

\subsection{Schoenberg--Gneiting kernels on spheres}

For spheres, there is a classical and rigorous alternative. Let $\Sphere^{d-1}=\{x\in\R^d:\|x\|=1\}$, $d\ge3$, and let $\lambda=(d-2)/2$. An isotropic kernel has the form
\[
k(x,y)=\psi(x^Ty).
\]
Schoenberg's theorem says that $k$ is positive definite on $\Sphere^{d-1}$ if
\begin{equation}
\label{eq:schoenberg}
\psi(t)=\sum_{m=0}^\infty b_m\frac{C_m^{(\lambda)}(t)}{C_m^{(\lambda)}(1)},
\qquad
b_m\ge0,
\qquad
\sum_{m=0}^\infty b_m<\infty,
\end{equation}
where $C_m^{(\lambda)}$ are Gegenbauer polynomials, and its converse also holds for continuous isotropic positive-definite functions on a fixed sphere \citep{schoenberg_1942_PositiveDefiniteFunctions, gneiting_2013_StrictlyNonstrictlyPositive}. The bound
\[
|C_m^{(\lambda)}(t)|\le C_m^{(\lambda)}(1),\qquad -1\le t\le1,
\]
ensures uniform convergence whenever $\sum_m b_m<\infty$.

A simple family is obtained by taking $b_m=\rho^m$ for $0<\rho<1$:
\begin{equation}
\label{eq:rho-sphere}
k_\rho(x,y)=\sum_{m=0}^\infty \rho^m\frac{C_m^{(\lambda)}(x^Ty)}{C_m^{(\lambda)}(1)}.
\end{equation}
For $\mathbb{S}^2$, where $\lambda=1/2$ and $C_m^{(1/2)}$ are Legendre polynomials $P_m$, this has the closed form
\begin{equation}
\label{eq:s2-poisson}
k_\rho(x,y)=\sum_{m=0}^\infty \rho^m P_m(x^Ty)
=(1-2\rho x^Ty+\rho^2)^{-1/2}.
\end{equation}
This is the kernel used in the sphere experiments below.

\begin{proposition}[Characteristicness on the sphere]
\label{prop:sphere-characteristic}
Let $k$ be an isotropic kernel on $\Sphere^{d-1}$, $d\ge3$, with expansion \eqref{eq:schoenberg}. If $b_m>0$ for every $m\ge0$, then $k$ is characteristic on Borel probability measures on $\Sphere^{d-1}$.
\end{proposition}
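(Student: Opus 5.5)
The plan is to express the kernel mean embedding of a probability measure through its spherical-harmonic coefficients and show that these coefficients vanish identically only when the measure is trivial. Fix $d\ge3$ and $\lambda=(d-2)/2$. For each $m\ge0$ let $\{Y_{m,j}\}_{j=1}^{N_m}$ be a real orthonormal basis of the degree-$m$ spherical harmonics in $L^2(\Sphere^{d-1},\sigma)$, where $\sigma$ is normalized surface measure. The addition theorem gives
\[
\frac{C_m^{(\lambda)}(x^Ty)}{C_m^{(\lambda)}(1)}=\frac{1}{N_m}\sum_{j=1}^{N_m}Y_{m,j}(x)Y_{m,j}(y).
\]
Substituting this into \eqref{eq:schoenberg} yields the Mercer-type expansion
\[
k(x,y)=\sum_{m\ge0}\frac{b_m}{N_m}\sum_{j}Y_{m,j}(x)Y_{m,j}(y).
\]
This series converges absolutely and uniformly, because $\sum_j Y_{m,j}(x)^2=N_m$ and hence each degree block is bounded by $b_m$, while $\sum_m b_m<\infty$.

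Take two Borel probability measures $P,Q$ and set $\nu=P-Q$, a finite signed measure with $\nu(\Sphere^{d-1})=0$. Suppose $\MMD_k(P,Q)=0$, that is, $\mu_P=\mu_Q$ in $\Hk$. By the reproducing property,
\[
\MMD_k^2(P,Q)=\iint k(x,y)\,\nu(dx)\,\nu(dy).
\]
Uniform convergence lets us integrate the expansion term by term. With $\hat\nu_{m,j}=\int Y_{m,j}\,d\nu$ this gives
\[
\MMD_k^2(P,Q)=\sum_{m\ge0}\frac{b_m}{N_m}\sum_{j}\hat\nu_{m,j}^2.
\]
Every term is nonnegative and every $b_m>0$, so a zero sum forces $\hat\nu_{m,j}=0$ for all $m$ and $j$. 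The case $m=0$ is automatic, since $\hat\nu_{0,1}$ is a multiple of the total mass $\nu(\Sphere^{d-1})=0$; the positivity hypothesis is really needed only for $m\ge1$.

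It remains to conclude that $\nu=0$. Finite linear combinations of spherical harmonics are exactly the restrictions of polynomials to the sphere. By Stone--Weierstrass these restrictions are uniformly dense in $C(\Sphere^{d-1})$. Hence $\int f\,d\nu=0$ for every continuous $f$. By the Riesz representation theorem, regularity of finite Borel measures on a compact metric space then gives $\nu=0$, so $P=Q$. Therefore $\MMD_k$ separates Borel probability measures, which is the meaning of characteristic. The same computation, through Proposition~\ref{prop:characteristic}, also confirms that $\MMD_k$ is a genuine metric.

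The step needing the most care is justifying the term-by-term integration and the identity $\MMD_k^2=\iint k\,d\nu\,d\nu$. I expect this to reduce to the uniform bound on each degree block together with the boundedness and continuity of $k$, which makes the mean embeddings Bochner integrals as in Assumption~\ref{ass:kernel}. Beyond that, I would cite the addition theorem rather than derive it.
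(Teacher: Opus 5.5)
Your proposal is correct and follows essentially the same route as the paper's proof: the addition theorem (you make the paper's unspecified constant $a_{m,d}$ explicit as $1/N_m$ under normalized surface measure), term-by-term integration of $\iint k\,d\nu\,d\nu$ into a nonnegative sum of squared harmonic coefficients, and then density of spherical harmonics in $C(\Sphere^{d-1})$ plus Riesz representation to get $\nu=0$. Your remark that the $m=0$ coefficient vanishes automatically also matches the paper's comment following the proposition.
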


For probability measures, strict positivity for $m\ge1$ is already enough because signed differences of probability measures integrate constants to zero. The all-$m$ condition is a simple sufficient condition. In particular, the family \eqref{eq:rho-sphere} is characteristic for every $0<\rho<1$.

\subsection{Pullback kernels from embeddings and diffeomorphisms}

A second safe route is to map the manifold into a Euclidean or Hilbert space where valid kernels are known, and then pull those kernels back. This is not always intrinsic in the sense of being determined by a geodesic distance, but it is mathematically valid and often geometrically natural.

\begin{proposition}[Pullback kernels]
\label{prop:pullback}
Let $\Psi:M\to E$ be a measurable map from $(M,\mathscr B)$ into a measurable subset of a Euclidean space or Hilbert space $E$. If $\widetilde k$ is a positive-definite kernel on $\Psi(M)$, then
\[
k_\Psi(x,y)=\widetilde k\{\Psi(x),\Psi(y)\}
\]
is positive definite on $M$. If, in addition, $\Psi(M)$ is equipped with the trace $\sigma$-field inherited from $E$, $\Psi:M\to\Psi(M)$ is a bimeasurable bijection, and $\widetilde k$ is characteristic on probability measures on $\Psi(M)$, then $k_\Psi$ is characteristic on probability measures on $M$.
\end{proposition}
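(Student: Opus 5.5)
The plan is to prove the two claims of Proposition~\ref{prop:pullback} separately. Positive definiteness is a direct check. Characteristicness comes from transporting measures along $\Psi$ and identifying kernel mean embeddings, in the same spirit as Theorem~\ref{thm:transport}.

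\emph{Positive definiteness.} Fix points $x_1,\ldots,x_N\in M$ and real coefficients $c_1,\ldots,c_N$. Put $z_i=\Psi(x_i)\in\Psi(M)$. Then
\[
\sum_{i,j}c_ic_j k_\Psi(x_i,x_j)=\sum_{i,j}c_ic_j\widetilde k(z_i,z_j)\ge0,
\]
by positive definiteness of $\widetilde k$ on $\Psi(M)$. Repeated $z_i$ cause no problem, since the definition allows arbitrary finite point sets. Symmetry is inherited in the same way. Measurability of $k_\Psi$ holds because $k_\Psi=\widetilde k\circ(\Psi\times\Psi)$, provided $\widetilde k$ is jointly measurable; this is the standing regularity in Assumption~\ref{ass:kernel}. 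Boundedness on the diagonal also transfers, since $\sup_x k_\Psi(x,x)\le\sup_z\widetilde k(z,z)$.

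\emph{Characteristicness.} Let $P,Q$ be probability measures on $M$ with equal mean embeddings, $\mu^{k_\Psi}_P=\mu^{k_\Psi}_Q$. It suffices to show $P=Q$. The key identity is
\[
\MMD_{k_\Psi}^2(P,Q)=\MMD_{\widetilde k}^2(\Psi_\#P,\Psi_\#Q).
\]
To prove it, expand both sides through the double-integral formula
\[
\MMD^2=\iint k\,dP\,dP-2\iint k\,dP\,dQ+\iint k\,dQ\,dQ,
\]
and apply the change-of-variables formula $\int f\circ\Psi\,dP=\int f\,d\Psi_\#P$ to each term on the product spaces. The integrals are finite by boundedness, so this step is legitimate. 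Here $\Psi_\#P$ is a probability measure on $\Psi(M)$ with its trace $\sigma$-field, because $\Psi$ is measurable into $E$ and takes values in $\Psi(M)$. Hence equal embeddings give $\MMD_{\widetilde k}(\Psi_\#P,\Psi_\#Q)=0$, and characteristicness of $\widetilde k$ yields $\Psi_\#P=\Psi_\#Q$.

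The last step, and the only real obstacle, is recovering $P=Q$ from $\Psi_\#P=\Psi_\#Q$; this is exactly where the bimeasurability hypothesis is needed. For any $B\in\mathscr B$, bimeasurability says $\Psi(B)$ lies in the trace $\sigma$-field. Injectivity gives $\Psi^{-1}(\Psi(B))=B$. Therefore
\[
P(B)=\Psi_\#P(\Psi(B))=\Psi_\#Q(\Psi(B))=Q(B).
\]
A further point to verify is that the class of probability measures on which $\widetilde k$ is assumed characteristic contains every pushforward $\Psi_\#P$. Since we take all probability measures on $\Psi(M)$ and the kernels are bounded, every mean embedding is defined, so this holds.
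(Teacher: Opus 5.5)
Your proof is correct and follows the same route as the paper: the direct quadratic-form check for positive definiteness, then the identity $\MMD_{k_\Psi}(P,Q)=\MMD_{\widetilde k}(\Psi_\#P,\Psi_\#Q)$, characteristicness of $\widetilde k$ to get $\Psi_\#P=\Psi_\#Q$, and bimeasurability to conclude $P=Q$. Your version only spells out steps the paper leaves implicit, namely the change of variables on product spaces and the explicit inversion $P(B)=\Psi_\#P(\Psi(B))$.
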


If a group $G$ acts on $M$ and $\Psi(gx)=U_g\Psi(x)$ for a unitary representation $U_g$ on $E$, then any $\widetilde k$ invariant under $U_g$ gives a $G$-invariant pullback kernel. This formalizes the sense in which some pullback kernels are canonical relative to a chosen geometry.

\begin{example}[Grassmann manifolds]
The Grassmann manifold $\mathrm{Gr}(p,m)$ can be represented by rank-$p$ orthogonal projectors. If $[U]$ denotes the subspace spanned by the columns of an orthonormal matrix $U$, define
\[
\Psi([U])=UU^T.
\]
Then
\[
d_{\mathrm{pr}}([U],[V])=\frac{1}{\sqrt2}\|UU^T-VV^T\|_F
\]
is the projection distance. Therefore
\[
k([U],[V])=\exp\{-\beta d_{\mathrm{pr}}([U],[V])^2\}
\]
is a valid pullback of a Euclidean Gaussian kernel on symmetric matrices. This is a projection-kernel construction, not a geodesic Gaussian in the canonical Grassmann geodesic distance \citep{harandi_2014_ManifoldManifoldGeometryAware, lim_2021_GrassmannianAffineSubspaces}.
\end{example}

\begin{example}[SPD manifolds]
Let $\mathrm{Sym}^+(m)$ denote the cone of symmetric positive-definite matrices. The matrix logarithm is a global diffeomorphism
\[
\log:\mathrm{Sym}^+(m)\to\mathrm{Sym}(m).
\]
The log-Euclidean kernel
\[
k_{\mathrm{LE}}(X,Y)=\exp\{-\beta\|\log X-\log Y\|_F^2\}
\]
is therefore a valid pullback Gaussian kernel and is characteristic, tied to the log-Euclidean geometry \citep{arsigny_2007_GeometricMeansNovel}. It should not be confused with a Gaussian of the affine-invariant geodesic distance.
\end{example}

\begin{example}[Correlation matrices]
The manifold of full-rank correlation matrices is the relative interior of the elliptope in the affine space of symmetric matrices with unit diagonal. Cholesky-derived geometries, including the Euclidean--Cholesky metric and log-Euclidean--Cholesky metric, are pullback geometries through maps that transform correlation matrices into Euclidean coordinates or matrix-log coordinates  \citep{thanwerdas_2022_TheoreticallyComputationallyConvenient}. Let $\Psi_{\mathrm{ECM}}$ or $\Psi_{\mathrm{LECM}}$ denote such a bimeasurable coordinate map onto its image. Then
\[
k(C,D)=\exp\{-\beta\|\Psi(C)-\Psi(D)\|^2\}
\]
is a valid pullback kernel relative to the chosen Cholesky geometry. Different geometries can produce different ESS values. The kernel ESS is coordinate-free after the geometry and kernel have been specified, not independent of that specification.
\end{example}

\section{Experiments}\label{sec:experiments}
\subsection{Rotation dependence of coordinate-wise ESS on the sphere}
\label{subsec:rotation-experiment}

The first experiment isolates the coordinate-dependence issue that motivates the proposed diagnostic.  The goal is not to compare samplers, but to take one fixed Markov chain path on a manifold and examine what happens when the same path is represented in different orthonormal coordinate frames.  If an ESS summary is intrinsic, it should not change under such a representation change.

We considered the unit sphere \(\bbS^2\subset \mathbb R^3\) and generated a Markov chain targeting a unimodal von Mises--Fisher distribution,
\[
    \Pi(dx) = {\rm vMF}(e_3,12)(dx),
    \qquad e_3=(0,0,1)^\top .
\]
The chain was produced by a random-walk Metropolis algorithm with symmetric proposal
\[
    q(y\mid x)={\rm vMF}(x,35)(dy).
\]
After discarding \(1000\) burn-in iterations, we retained \(n=3000\) draws.  The observed acceptance rate was \(0.733\).

Let \(X_1,\ldots,X_n\) denote the retained path.  We then generated \(80\) independent Haar-distributed rotations \(Q_1,\ldots,Q_{80}\in SO(3)\) and formed
\[
    X_t^{(r)} = Q_r X_t,\qquad t=1,\ldots,n.
\]
These rotated sequences represent the same abstract path on $\bbS^2$, viewed under different orthonormal frames.  For each rotated path, we computed ordinary scalar ESS values for the three coordinate time series
\[
    \{(X_t^{(r)})_1\}_{t=1}^n,\qquad
    \{(X_t^{(r)})_2\}_{t=1}^n,\qquad
    \{(X_t^{(r)})_3\}_{t=1}^n.
\]
We also computed the intrinsic kernel ESS using the Schoenberg-valid kernel on \(\bbS^2\)
\[
    k_\rho(x,y)
    =
    \frac{1}{\{1-2\rho x^\top y+\rho^2\}^{1/2}},
    \qquad \rho=0.75.
\]
This is the \(\bbS^2\) closed form of the Legendre expansion
\[
    \sum_{\ell=0}^\infty \rho^\ell P_\ell(x^\top y),
\]
and therefore is positive definite on the sphere.  The long-run variance was estimated using the Bartlett lag-window estimator described in Section~\ref{sec:estimation}, with bandwidth \(b=\lfloor n^{1/3}\rfloor=14\).

The key point is that the kernel Gram matrix is invariant under rotations:
\[
    k_\rho(Qx,Qy)=k_\rho(x,y),\qquad Q\in SO(3).
\]
Thus, the estimated kernel ESS must be unchanged by the rotations up to numerical roundoff.  Coordinate-wise ESS values, on the other hand, need not be preserved because the coordinate projections themselves change from one rotated frame to another.

Table~\ref{tab:exp1-rotation} summarizes the results.  Across all coordinate projections and rotations, the coordinate-wise ESS values had mean \(338.68\), standard deviation \(13.05\), minimum \(326.35\), and maximum \(395.38\).  The range divided by the mean was \(0.204\), even though every value was computed from a rotated representation of the same stored path.  The three coordinate axes behaved similarly, with coefficients of variation between \(0.033\) and \(0.043\).  By contrast, the intrinsic kernel ESS was \(505.24\) for every rotated path; the maximum absolute deviation from the unrotated value was less than \(10^{-6}\) in the reported output.  The corresponding estimated quantities for the unrotated path were
\[
    \widehat\gamma_0 = 1.7782,
    \qquad
    \widehat\sigma_k^2 = 10.5583,
    \qquad
    \widehat{\mathrm{ESS}}_k = 505.24 .
\]

\begin{table}[t]
\centering
\caption{Rotation experiment on \(\bbS^2\).  The same stored path was rotated by \(80\) independent elements of \(SO(3)\).  Coordinate-wise ESS values change with the chosen coordinate frame, while the intrinsic kernel ESS is unchanged up to numerical roundoff.}
\label{tab:exp1-rotation}
\begin{tabular}{lrrrrr}
\hline
Quantity & Mean & SD & Min. & Max. & Range/mean\\
\hline
Coordinate ESS, pooled axes & 338.68 & 13.05 & 326.35 & 395.38 & 0.204\\
Coordinate ESS, first rotated axis & 337.40 & 11.13 & 326.35 & 389.46 & 0.187\\
Coordinate ESS, second rotated axis & 339.72 & 14.45 & 326.43 & 394.27 & 0.200\\
Coordinate ESS, third rotated axis & 338.92 & 13.41 & 326.64 & 395.38 & 0.203\\
Intrinsic kernel ESS & 505.24 & 0.00 & 505.24 & 505.24 & 0.000\\
\hline
\end{tabular}
\end{table}

Figure~\ref{fig:exp1-rotation} gives the same comparison graphically.  The coordinate-wise summaries vary as the coordinate frame is rotated, whereas the kernel ESS is constant.  This experiment illustrates the central distinction between coordinate-based and intrinsic summaries: coordinate ESS values depend on the representation of the path, while the proposed kernel ESS depends only on pairwise spherical relations through \(x^\top y\).

\begin{figure}[t]
\centering
\includegraphics[width=.99\linewidth]{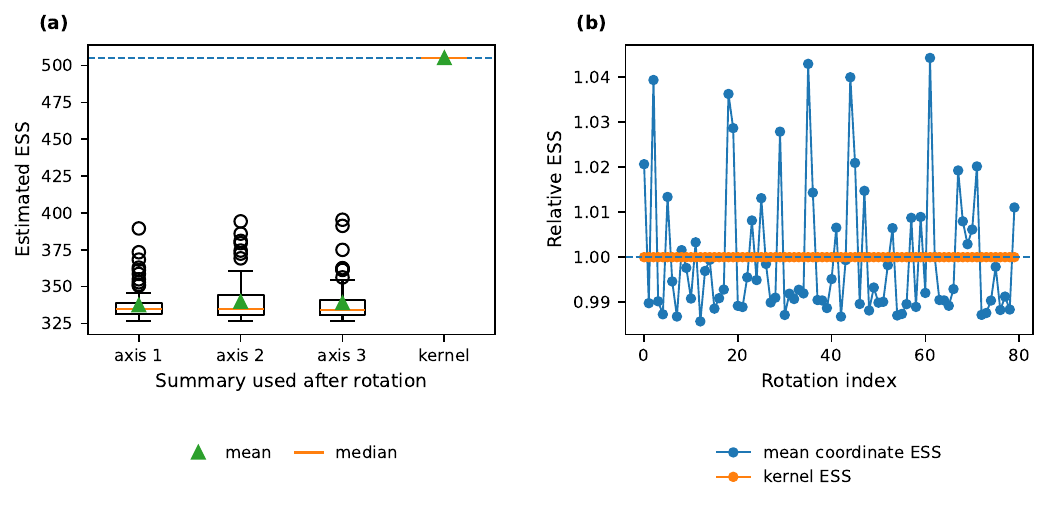}
\caption{
Rotation experiment on \(\bbS^2\).
Panel (a) compares ordinary coordinate-wise ESS values with the intrinsic kernel ESS after applying random rotations to the same stored path.  The coordinate-wise values are shown separately for the three rotated coordinate axes, while the kernel ESS is unchanged; orange lines and green triangles denote medians and means, respectively, and the dashed horizontal line marks the unrotated kernel ESS.  Panel (b) shows relative variation across rotations: the coordinate summary changes with the coordinate frame, whereas the kernel ESS remains equal to its unrotated value up to numerical roundoff.
}
\label{fig:exp1-rotation}
\end{figure}

\subsection{Multimodal spherical mixture and kernel-risk calibration}
\label{subsec:spherical-mixture-experiment}

The second experiment examines whether the proposed kernel ESS is informative for distributional approximation under the kernel loss.  Unlike the previous experiment, which used one fixed path and rotated its representation, this experiment compares two Markov chains with substantially different global mixing behaviour on a multimodal spherical target.  The purpose is twofold: first, to show how the intrinsic ESS varies across kernel scales; and second, to compare the estimated long-run risk constant \(\widehat\sigma_k^2\) with an empirical MMD error computed against a large independent reference sample.

We used a four-component von Mises--Fisher mixture on \(\bbS^2\).  The component means were the vertices of a regular tetrahedron,
\[
\begin{split}
\mu_1&=(1,1,1)^\top/\sqrt 3,\qquad
\mu_2=(1,-1,-1)^\top/\sqrt 3,\\
\mu_3&=(-1,1,-1)^\top/\sqrt 3,\qquad
\mu_4=(-1,-1,1)^\top/\sqrt 3,
\end{split}
\]
with common concentration \(\kappa=28\) and mixture weights
\[
    (w_1,w_2,w_3,w_4)=(0.4,0.3,0.2,0.1).
\]
Thus the target density with respect to spherical surface measure was
\[
    \pi(x)
    =
    \sum_{j=1}^4 w_j c(\kappa)\exp\{\kappa \mu_j^\top x\},
    \qquad x\in \bbS^2,
\]
where \(c(\kappa)\) is the normalizing constant of the von Mises--Fisher density on \(\bbS^2\).  The common concentration ensures that the displayed \(w_j\)'s are the actual mixture weights.

We compared two Metropolis--Hastings chains.  The first was a local random-walk chain with proposal
\[
    q_{\rm loc}(y\mid x)={\rm vMF}(x,90)(dy).
\]
This proposal has high local acceptance but is not designed to cross between well-separated modes.  The second was an independence Metropolis--Hastings chain with a broad proposal mixture
\[
    q_{\rm ind}(y)
    =
    \frac14\sum_{j=1}^4 {\rm vMF}(\mu_j,12)(dy).
\]
This proposal uses the mode locations to provide a deliberately well-mixing reference chain; it is not proposed as a general-purpose sampler.  For each method we generated \(20\) independent replications, discarded \(1000\) burn-in iterations, and retained \(n=2500\) draws.  The iid reference sample used to estimate MMD errors had size \(m=8000\).  The nearest-mode probabilities in the iid reference sample were
\[
    (0.4096,\;0.3003,\;0.1908,\;0.0994),
\]
close to the nominal mixture weights.

For each retained path, we computed the kernel ESS for the Schoenberg-valid \(\bbS^2\) kernel
\[
    k_\rho(x,y)
    =
    \{1-2\rho x^\top y+\rho^2\}^{-1/2},
\]
using \(\rho\in\{0.35,0.60,0.85\}\).  Larger values of \(\rho\) place more weight on higher-order spherical harmonics and therefore probe finer spatial scales.  The long-run variance estimator used the Bartlett lag window with bandwidth \(b=\lfloor n^{1/3}\rfloor=13\).

To compare the empirical MMD error with the risk identity, let \(\hat\Pi_{n,r}\) denote the empirical measure of replication \(r\), and let \(\hat\Pi_m^{\rm ref}\) denote the iid reference empirical measure.  Since the reference measure is also empirical, we used the finite-reference correction
\[
    \widehat D_{\rho,r}
    =
    n\left\{
    \mathrm{MMD}_{k_\rho}^2(\hat\Pi_{n,r},\hat\Pi_m^{\rm ref})
    -
    \frac{\widehat\gamma_{0,\rho}^{\rm ref}}{m}
    \right\},
\]
where \(\widehat\gamma_{0,\rho}^{\rm ref}\) is the iid reference estimate of the one-sample kernel variance.  For stationary output representative of \(\Pi\), the theory predicts
\[
    \widehat D_{\rho,r}\approx \widehat\sigma_{k_\rho,r}^2,
\]
up to Monte Carlo variation and finite-reference error.  We also recorded the total variation error between the chain's nearest-mode frequencies and the iid reference nearest-mode frequencies.

Table~\ref{tab:exp2-mixture} summarizes the kernel-risk quantities.  The independence mixture chain explored the four modes well.  Its estimated intrinsic kernel ESS was about \(900\) across the three kernel scales, corresponding to approximately \(36\%\) of the retained sample size.  Its corrected empirical risk constants \(\widehat D_{\rho,r}\) were close to the estimated long-run variance constants \(\widehat\sigma_{k_\rho}^2\): the average calibration ratios \(\widehat D_{\rho,r}/\widehat\sigma_{k_\rho,r}^2\) were \(1.22\), \(1.25\), and \(1.26\) for \(\rho=0.35,0.60,0.85\), respectively.  Its mean nearest-mode total variation error, shown in Fig.~\ref{fig:exp2-mixture}, was \(0.023\), and its mean acceptance rate was \(0.526\).

The local vMF chain behaved very differently.  Although its mean acceptance rate was high, \(0.734\), it failed to traverse the separated modes.  Its mean nearest-mode total variation error was \(0.709\), and the corrected MMD risk constants were orders of magnitude larger than the corresponding within-chain long-run variance estimates.  For example, at \(\rho=0.60\), the local chain had \(\widehat\sigma_k^2\approx 3.21\), but \(n\widehat{\mathrm{MMD}}_k^2\approx 2407.58\).  This discrepancy is not a contradiction of the theory: the theory concerns stationary representative output, whereas the local chains remain effectively confined to a subset of the target support after burn-in.  The example therefore illustrates an important practical point shared by all ESS diagnostics: ESS estimates dependence in the observed output and should be interpreted together with convergence and distributional checks.

\begin{table}[t]
\centering
\caption{Multimodal spherical mixture experiment.  Values are means over \(20\) replications, with Monte Carlo standard deviations in parentheses.  The column \(n\widehat{\mathrm{MMD}}^2\) reports the finite-reference-corrected quantity \(\widehat D_{\rho,r}\).  The ratio column is \(n\widehat{\mathrm{MMD}}^2/\widehat\sigma_k^2\).  Mode-frequency errors are shown separately in Fig.~\ref{fig:exp2-mixture}.}
\label{tab:exp2-mixture}
\begin{tabular}{lrrrrr}
\hline
Sampler & \(\rho\) & ESS & \(\widehat\sigma_k^2\) &
\(n\widehat{\mathrm{MMD}}^2\) & Ratio\\
\hline
\multirow{3}{*}{\shortstack{Ind.\\ mixture}}
& 0.35 & 908.2 (58.5) & 1.36 (0.08) & 1.63 (1.33) & 1.22 (1.07)\\
& 0.60 & 910.1 (55.0) & 3.68 (0.20) & 4.50 (3.06) & 1.25 (0.91)\\
& 0.85 & 904.6 (45.4) & 14.40 (0.66) & 17.95 (6.67) & 1.26 (0.51)\\
\hline
\multirow{3}{*}{\shortstack{Local\\vMF}}
& 0.35 & 284.6 (8.2) & 0.66 (0.05) & 1072.45 (304.99) & 1642.32 (485.52)\\
& 0.60 & 310.7 (8.9) & 3.21 (0.22) & 2407.58 (683.19) & 754.61 (222.76)\\
& 0.85 & 431.3 (12.6) & 19.15 (0.89) & 4868.58 (1381.62) & 255.49 (75.09)\\
\hline
\end{tabular}
\end{table}

Figure~\ref{fig:exp2-mixture} displays the same comparison graphically.  The independence mixture chain has much larger intrinsic kernel ESS than the local vMF chain at all three kernel scales, while also producing much smaller mode-mass error.  The local chain's high acceptance rate is therefore misleading: it reflects frequent accepted local moves, not adequate exploration of the target distribution.

\begin{figure}[t]
\centering
\includegraphics[width=.99\linewidth]{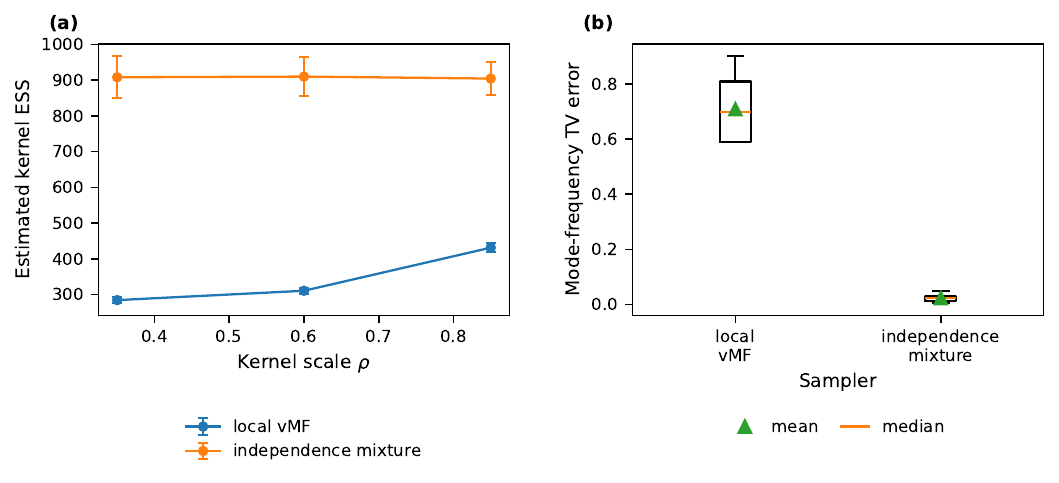}
\caption{
Multimodal spherical mixture experiment.
Panel (a) shows estimated intrinsic kernel ESS over kernel scales \(\rho\in\{0.35,0.60,0.85\}\); points and error bars denote means and one empirical standard deviation over independent replications.  Panel (b) shows total variation error between nearest-mode frequencies from each chain and those from the iid reference sample; orange lines and green triangles denote medians and means, respectively.  The independence mixture chain gives substantially larger intrinsic kernel ESS and smaller mode-mass error than the local vMF random-walk chain.
}
\label{fig:exp2-mixture}
\end{figure}

For completeness, Figure~\ref{fig:exp2-calibration} compares the corrected empirical MMD risk constants with the estimated long-run variance constants.  The independence mixture chain lies near the calibration predicted by the stationary risk identity.  The local vMF chain does not: its empirical MMD error is dominated by failure to visit modes rather than by within-mode autocorrelation.  This reinforces the intended interpretation of the proposed diagnostic.  The intrinsic kernel ESS is a coordinate-free effective sample size for representative MCMC output under a chosen kernel loss; it is not, by itself, a substitute for checking whether the chain has explored the target support.

\begin{figure}[t]
\centering
\includegraphics[width=.99\linewidth]{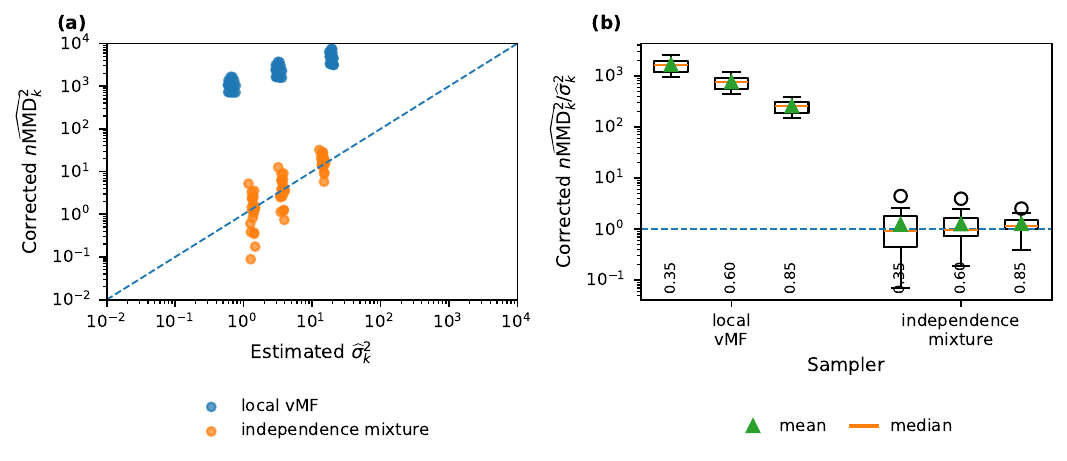}
\caption{
Kernel-risk calibration in the spherical mixture experiment.
Panel (a) compares the corrected empirical distributional error \(n\widehat{\mathrm{MMD}}_k^2\) with the estimated long-run risk constant \(\widehat\sigma_k^2\), with the dashed line denoting equality.  Panel (b) shows the corresponding calibration ratios \(n\widehat{\mathrm{MMD}}_k^2/\widehat\sigma_k^2\), grouped by sampler and kernel scale; orange lines and green triangles denote medians and means, respectively.  The well-mixing independence chain is close to the prediction of the stationary risk identity, whereas the local vMF chain is dominated by non-exploration of modes.
}
\label{fig:exp2-calibration}
\end{figure}

\section{Discussion}
\label{sec:discussion}

Effective sample size is meaningful only after an inferential task has been specified.  For scalar MCMC output the task is estimation of the expectation of a chosen function, and for multivariate output it is estimation of the expectation of a chosen Euclidean vector.  For manifold-valued output there is no canonical vector representation.  The kernel ESS proposed here instead uses distributional approximation as the task: it measures the number of independent samples that would give the same expected squared kernel discrepancy between the empirical law and the target law.  This gives a scalar summary with a finite-sample risk interpretation, an asymptotic autocorrelation representation, and invariance under transformations that preserve the chosen kernel.

The construction is not kernel-free.  The kernel determines which discrepancies between probability laws are emphasized, just as a scalar ESS depends on the selected function and a multivariate ESS depends on the selected vector output.  In applications it may therefore be useful to report kernel ESS over a small collection of geometrically meaningful kernels or bandwidths.  Broad kernels emphasize large-scale distributional features, whereas more localized kernels can be sensitive to finer discrepancies.  This multiscale interpretation is especially natural on manifolds, where local movement and global exploration can differ substantially.

Kernel validity is an important practical issue.  A Gaussian kernel obtained by replacing Euclidean distance with geodesic distance is not generally positive definite on curved manifolds, and therefore does not automatically define an RKHS or an MMD.  The examples in Section~\ref{sec:kernels} illustrate safer constructions: Schoenberg--Gneiting kernels on spheres and pullback kernels arising from projection embeddings or diffeomorphic geometries.  These examples are not exhaustive, but they emphasize that the kernel should be chosen together with the geometry of the state space.

The proposed diagnostic should also not be interpreted as a replacement for convergence assessment.  The population theory concerns representative stationary output and quantifies serial dependence under the chosen kernel loss.  If a chain has not explored the target support, the estimated ESS may describe dependence only within the region visited by the chain.  The spherical mixture experiment illustrates this point: the local vMF chain had high acceptance but poor mode exploration, leading to large distributional error.  In practice, kernel ESS should therefore be reported together with convergence checks, mode-occupancy diagnostics when appropriate, and direct distributional comparisons when reference samples are available.

Several extensions are natural.  The lag-window estimator studied here is one convenient choice; batch-means or other spectral estimators could be developed under alternative dependence conditions.  Data-driven kernel selection is another important direction, since different kernels measure different aspects of Monte Carlo accuracy.  Finally, the same risk-ratio principle could be applied to other intrinsic tasks, such as Fr\'echet mean estimation on manifolds where the mean is unique and regular.  Such extensions would require additional geometric assumptions and are complementary to the distributional kernel approach developed here.

\bibliographystyle{dcu}
\bibliography{references}

@inproceedings{feragen_2015_GeodesicExponentialKernels,
	address = {Boston, MA, USA},
	title = {Geodesic exponential kernels: {When} curvature and linearity conflict},
	isbn = {978-1-4673-6964-0},
	shorttitle = {Geodesic exponential kernels},
	doi = {10.1109/CVPR.2015.7298922},
	urldate = {2022-07-26},
	booktitle = {2015 {IEEE} {Conference} on {Computer} {Vision} and {Pattern} {Recognition} ({CVPR})},
	publisher = {IEEE},
	author = {Feragen, Aasa and Lauze, Francois and Hauberg, Soren},
	month = jun,
	year = {2015},
	pages = {3032--3042},
}

@article{arsigny_2007_GeometricMeansNovel,
	title = {Geometric {Means} in a {Novel} {Vector} {Space} {Structure} on {Symmetric} {Positive}‐{Definite} {Matrices}},
	volume = {29},
	issn = {0895-4798, 1095-7162},
	doi = {10.1137/050637996},
	number = {1},
	urldate = {2021-09-27},
	journal = {SIAM Journal on Matrix Analysis and Applications},
	author = {Arsigny, Vincent and Fillard, Pierre and Pennec, Xavier and Ayache, Nicholas},
	month = jan,
	year = {2007},
	pages = {328--347},
}

@incollection{harandi_2014_ManifoldManifoldGeometryAware,
	address = {Cham},
	title = {From {Manifold} to {Manifold}: {Geometry}-{Aware} {Dimensionality} {Reduction} for {SPD} {Matrices}},
	volume = {8690},
	isbn = {978-3-319-10604-5 978-3-319-10605-2},
	shorttitle = {From {Manifold} to {Manifold}},
	doi = {10.1007/978-3-319-10605-2_2},
	language = {en},
	urldate = {2026-05-04},
	booktitle = {Computer {Vision} – {ECCV} 2014},
	publisher = {Springer International Publishing},
	author = {Harandi, Mehrtash T. and Salzmann, Mathieu and Hartley, Richard},
	editor = {Fleet, David and Pajdla, Tomas and Schiele, Bernt and Tuytelaars, Tinne},
	year = {2014},
	pages = {17--32},
}

@article{thanwerdas_2022_TheoreticallyComputationallyConvenient,
	title = {Theoretically and {Computationally} {Convenient} {Geometries} on {Full}-{Rank} {Correlation} {Matrices}},
	volume = {43},
	issn = {0895-4798, 1095-7162},
	doi = {10.1137/22M1471729},
	language = {en},
	number = {4},
	journal = {SIAM Journal on Matrix Analysis and Applications},
	author = {Thanwerdas, Yann and Pennec, Xavier},
	month = dec,
	year = {2022},
	pages = {1851--1872},
}

@article{sriperumbudur_2011_UniversalityCharacteristicKernels,
	title = {Universality, {Characteristic} {Kernels} and {RKHS} {Embedding} of {Measures}},
	volume = {12},
	number = {70},
	journal = {Journal of Machine Learning Research},
	author = {Sriperumbudur, Bharath K. and Fukumizu, Kenji and Lanckriet, Gert R.G.},
	year = {2011},
	pages = {2389--2410},
}

@article{gneiting_2013_StrictlyNonstrictlyPositive,
	title = {Strictly and non-strictly positive definite functions on spheres},
	volume = {19},
	issn = {1350-7265},
	doi = {10.3150/12-BEJSP06},
	number = {4},
	urldate = {2026-05-02},
	journal = {Bernoulli},
	author = {Gneiting, Tilmann},
	month = sep,
	year = {2013},
}

@article{schoenberg_1942_PositiveDefiniteFunctions,
	title = {Positive definite functions on spheres},
	volume = {9},
	issn = {0012-7094},
	doi = {10.1215/S0012-7094-42-00908-6},
	number = {1},
	urldate = {2026-05-02},
	journal = {Duke Mathematical Journal},
	author = {Schoenberg, I. J.},
	month = mar,
	year = {1942},
}

@article{sejdinovic_2013_EquivalenceDistancebasedRKHSbased,
	title = {Equivalence of distance-based and {RKHS}-based statistics in hypothesis testing},
	volume = {41},
	issn = {0090-5364},
	doi = {10.1214/13-AOS1140},
	number = {5},
	urldate = {2026-05-02},
	journal = {The Annals of Statistics},
	author = {Sejdinovic, Dino and Sriperumbudur, Bharath and Gretton, Arthur and Fukumizu, Kenji},
	month = oct,
	year = {2013},
}

@article{gretton_2012_KernelTwoSampleTest,
	title = {A {Kernel} {Two}-{Sample} {Test}},
	volume = {13},
	issn = {1532-4435},
	number = {null},
	journal = {Journal of Machine Learning Research},
	publisher = {JMLR.org},
	author = {Gretton, Arthur and Borgwardt, Karsten M. and Rasch, Malte J. and Schölkopf, Bernhard and Smola, Alexander},
	month = mar,
	year = {2012},
	pages = {723--773},
}

@article{vats_2019_MultivariateOutputAnalysis,
	title = {Multivariate output analysis for {Markov} chain {Monte} {Carlo}},
	volume = {106},
	issn = {0006-3444, 1464-3510},
	doi = {10.1093/biomet/asz002},
	language = {en},
	number = {2},
	urldate = {2026-04-19},
	journal = {Biometrika},
	author = {Vats, Dootika and Flegal, James M and Jones, Galin L},
	month = jun,
	year = {2019},
	pages = {321--337},
}

@article{lim_2021_GrassmannianAffineSubspaces,
	title = {The {Grassmannian} of affine subspaces},
	volume = {21},
	issn = {1615-3375, 1615-3383},
	doi = {10.1007/s10208-020-09459-8},
	language = {en},
	number = {2},
	urldate = {2022-07-25},
	journal = {Foundations of Computational Mathematics},
	author = {Lim, Lek-Heng and Wong, Ken Sze-Wai and Ye, Ke},
	month = apr,
	year = {2021},
	pages = {537--574},
}

\appendix
\section{Proofs of the population results}
\label{app:proofs}

\begin{proof}[Proof of \eqref{eq:iid-risk}]
Let $\bar\phi_m=m^{-1}\sum_{i=1}^m\phi(Y_i)$, where $Y_i\iid\Pi$. Since $\E\phi(Y_i)=0$ in $\Hk$ and the $Y_i$ are independent,
\[
\E\|\bar\phi_m\|^2
=\frac{1}{m^2}\sum_{i,j=1}^m\E\inner{\phi(Y_i)}{\phi(Y_j)}
=\frac{1}{m^2}\sum_{i=1}^m\E\|\phi(Y_i)\|^2
=\frac{\gamma_0}{m}.
\]
\end{proof}

\begin{proof}[Proof of Theorem~\ref{thm:exact-risk}]
By \eqref{eq:mmd-centred},
\[
R_n(k)=
\E\left\|\frac1n\sum_{t=1}^n\phi(X_t)\right\|^2
=
\frac1{n^2}\sum_{s=1}^n\sum_{t=1}^n
\E\inner{\phi(X_s)}{\phi(X_t)}.
\]
By stationarity this expectation equals $\gamma_{t-s}$, and by symmetry $\gamma_{t-s}=\gamma_{|t-s|}$. There are $n$ pairs with lag zero and, for each $\ell=1,\ldots,n-1$, there are $2(n-\ell)$ ordered pairs with absolute lag $\ell$. This gives \eqref{eq:exact-risk}. Formula \eqref{eq:exact-ess-formula} follows by substituting \eqref{eq:exact-risk} into Definition~\ref{def:kernel-ess}.
\end{proof}

\begin{proof}[Proof of Theorem~\ref{thm:asymp-risk}]
The exact identity gives
\[
nR_n(k)=\gamma_0+2\sum_{\ell=1}^{n-1}\left(1-\frac{\ell}{n}\right)\gamma_\ell.
\]
Since $|1-\ell/n|\le1$ for $\ell<n$ and $\sum_{\ell\ge1}|\gamma_\ell|<\infty$, dominated convergence yields \eqref{eq:sigma-k}. Because $nR_n(k)\ge0$ for every $n$, its limit $\sigma_k^2$ is nonnegative. If $\sigma_k^2>0$, then
\[
\frac{\ESS_k^{(n)}}{n}
=\frac{\gamma_0}{nR_n(k)}\to \frac{\gamma_0}{\sigma_k^2}.
\]
If $\sigma_k^2=0$, then $nR_n(k)\to0$, so $R_n(k)=o(n^{-1})$ and $\ESS_k^{(n)}/n=\gamma_0/(nR_n(k))\to\infty$ whenever $R_n(k)>0$.
\end{proof}

\begin{proof}[Proof of Theorem~\ref{thm:transport}]
Let $\hat\Pi_{n,X}=n^{-1}\sum_t\delta_{X_t}$ and $\hat\Pi_{n,Y}=n^{-1}\sum_t\delta_{Y_t}$. The squared MMD between empirical and target measures can be written in kernel-evaluation form as
\[
\frac1{n^2}\sum_{s,t=1}^n k(X_s,X_t)
-\frac2n\sum_{t=1}^n\int k(X_t,x)\,\Pi(dx)
+\iint k(x,x')\,\Pi(dx)\Pi(dx').
\]
Using $Y_t=T(X_t)$, $T_\#\Pi$ for the target on $M_2$, and the transported-kernel identity \eqref{eq:transport-kernel}, the corresponding expression for $(Y_t,k_2,T_\#\Pi)$ equals the expression for $(X_t,k_1,\Pi)$ pathwise. Taking expectations gives equality of $R_n$. Applying the same identity with $n=1$ gives equality of the corresponding $\gamma_0$ terms, since $R_1(k)=\gamma_0$. Therefore the exact kernel ESS values agree.
\end{proof}

\begin{proof}[Proof of Proposition~\ref{prop:characteristic}]
By definition, a kernel is characteristic on $\mathcal P$ if the mean embedding map $\nu\mapsto\int k(x,\cdot)\,\nu(dx)$ is injective on $\mathcal P$. Thus $\MMD_k(\nu_1,\nu_2)=0$ implies $\nu_1=\nu_2$ exactly when $k$ is characteristic. Otherwise MMD is a pseudometric because it remains nonnegative, symmetric, and satisfies the triangle inequality as a Hilbert-space norm, but may fail to separate distinct probability laws.
\end{proof}

\begin{proof}[Proof of Proposition~\ref{prop:operator}]
Boundedness of $\phi$ implies
\[
\E\|\phi(X_0)\otimes\phi(X_\ell)\|_1
=\E\{\|\phi(X_0)\|\,\|\phi(X_\ell)\|\}<\infty,
\]
so each $\Gamma_\ell$ is a well-defined trace-class Bochner integral. By stationarity,
\[
\Gamma_{-\ell}
=\E\{\phi(X_0)\otimes\phi(X_{-\ell})\}
=\E\{\phi(X_\ell)\otimes\phi(X_0)\}
=\Gamma_\ell^*.
\]
Assumption~\ref{ass:trace-sum} gives absolute convergence of $\sum_{\ell\in\mathbb Z}\Gamma_\ell$ in trace norm and hence the representation of $S$.

The trace is a continuous linear functional on the trace-class operators. Therefore
\[
\tr(\Gamma_\ell)
=\E\,\tr\{\phi(X_0)\otimes\phi(X_\ell)\}
=\E\inner{\phi(X_0)}{\phi(X_\ell)}
=\gamma_\ell.
\]
In particular, $\tr(C)=\tr(\Gamma_0)=\gamma_0$. Taking traces in the absolutely convergent series for $S$ gives
\[
\tr(S)=\sum_{\ell\in\mathbb Z}\tr(\Gamma_\ell)=\sum_{\ell\in\mathbb Z}\gamma_\ell=\sigma_k^2.
\]
The ESS statement follows from Theorem~\ref{thm:asymp-risk}.
\end{proof}

\begin{proof}[Proof of Corollary~\ref{cor:harmonic}]
Since $C$ is positive trace-class, $\tr(C)=\sum_j\lambda_j$. If $\lambda_j=0$, then $\Var(Y_{j,0})=0$, so $Y_{j,t}=0$ almost surely for every $t$ by stationarity; such directions contribute neither to $\tr(C)$ nor to $\tr(S)$. For $\lambda_j>0$,
\[
\sigma_j^2=\sum_{\ell\in\mathbb Z}\Cov(Y_{j,0},Y_{j,\ell}).
\]
Using an orthonormal eigenbasis of $C$ and the trace representation,
\[
\tr(S)=\sum_j\inner{S e_j}{e_j}
=\sum_j\sigma_j^2.
\]
Because $\ESS_{j,n}=n\lambda_j/\sigma_j^2$, we have $\sigma_j^2=n\lambda_j/\ESS_{j,n}$, with the convention that zero long-run variance contributes zero. Hence
\[
\ESS_k^{(n)}\sim n\frac{\sum_j\lambda_j}{\sum_j\sigma_j^2}
=\frac{\sum_j\lambda_j}{\sum_{j:\lambda_j>0}\lambda_j/\ESS_{j,n}}.
\]
\end{proof}

\section{Proofs for valid manifold kernels}
\label{app:kernels}

\begin{proof}[Proof of Proposition~\ref{prop:sphere-characteristic}]
Let $\nu=\nu_1-\nu_2$ be a finite signed measure equal to the difference of two probability measures. By the addition theorem for spherical harmonics, for each degree $m$ there exists a positive constant $a_{m,d}$ and an orthonormal basis $\{Y_{mj}\}_{j=1}^{N(m,d)}$ of degree-$m$ spherical harmonics such that
\[
\frac{C_m^{(\lambda)}(x^Ty)}{C_m^{(\lambda)}(1)}
=a_{m,d}\sum_{j=1}^{N(m,d)}Y_{mj}(x)Y_{mj}(y).
\]
The standard bound $|C_m^{(\lambda)}(t)|\le C_m^{(\lambda)}(1)$ and $\sum_m b_m<\infty$ justify term-by-term integration by dominated convergence. Thus
\[
\iint k(x,y)\,\nu(dx)\nu(dy)
=\sum_{m=0}^\infty b_m a_{m,d}\sum_{j=1}^{N(m,d)}\left(\int Y_{mj}(x)\,\nu(dx)\right)^2.
\]
All terms are nonnegative. If the MMD is zero and $b_m>0$ for every $m$, then all spherical harmonic coefficients of $\nu$ vanish. The linear span of spherical harmonics is dense in $C(\Sphere^{d-1})$, so $\int f\,d\nu=0$ for every continuous $f$. By the Riesz representation theorem, $\nu=0$. Hence $\nu_1=\nu_2$.
\end{proof}

\begin{proof}[Proof of Proposition~\ref{prop:pullback}]
For any $x_1,\ldots,x_n\in M$ and $a_1,\ldots,a_n\in\R$,
\[
\sum_{i,j=1}^n a_i a_j k_\Psi(x_i,x_j)
=\sum_{i,j=1}^n a_i a_j\widetilde k\{\Psi(x_i),\Psi(x_j)\}\ge0,
\]
so $k_\Psi$ is positive definite. For characteristicness, observe that
\[
\MMD_{k_\Psi}(\nu_1,\nu_2)
=
\MMD_{\widetilde k}(\Psi_\#\nu_1,\Psi_\#\nu_2).
\]
If $\widetilde k$ is characteristic, equality to zero implies $\Psi_\#\nu_1=\Psi_\#\nu_2$. Since $\Psi$ is a bimeasurable bijection onto its image, equality of pushforwards implies $\nu_1=\nu_2$.
\end{proof}

\section{Proof of the estimator theorem}
\label{app:estimator-proof}

This appendix proves Theorem~\ref{thm:estimator-consistency}. The constants below may change from line to line and depend only on the kernel bound and the summable sequence of absolute-regularity coefficients.

First define the oracle centered lag estimator
\[
\hat\gamma_\ell^\circ
=\frac{1}{n-\ell}\sum_{t=1}^{n-\ell}\inner{\phi(X_t)}{\phi(X_{t+\ell})},
\qquad 0\le\ell<n.
\]
The observed estimator $\hat\gamma_\ell$ replaces $\phi$ by $\hat\phi_n=\phi-\bar\phi_n$, where
\[
\bar\phi_n=\frac1n\sum_{t=1}^n\phi(X_t).
\]

\begin{lemma}[Population summability]
\label{lem:gamma-summability}
Under Assumptions~\ref{ass:kernel} and~\ref{ass:estimation}, $\sum_{\ell\ge1}|\gamma_\ell|<\infty$.
\end{lemma}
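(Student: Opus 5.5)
The plan is to bound each $|\gamma_\ell|$ by a constant multiple of the absolute-regularity coefficient $\beta(\ell)$, and then sum over $\ell$ using the summability in Assumption~\ref{ass:estimation}.

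The key tool is the standard covariance inequality for bounded variables under $\beta$-mixing, applied in a form that does not require any Hilbert-space covariance inequality. We write
\[
\gamma_\ell=\E\, g(X_0,X_\ell),\qquad g(x,y)=\inner{\phi(x)}{\phi(y)}_{\Hk}.
\]
The function $g$ is jointly measurable, because the feature map is Bochner measurable by Assumption~\ref{ass:kernel}. It is bounded, with $|g|\le 4K_0$ since $\|\phi\|\le 2K_0^{1/2}$.

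Let $P_{0,\ell}$ denote the joint law of $(X_0,X_\ell)$ and $\Pi\otimes\Pi$ the product of the marginals. Under the product law the two arguments are independent, so
\[
\iint g\,d(\Pi\otimes\Pi)=\inner{\int\phi\,d\Pi}{\int\phi\,d\Pi}=0,
\]
because $\int\phi\,d\Pi=0$. Bochner integrals commute with the continuous bilinear inner product, which justifies moving the integrals inside. It follows that
\[
|\gamma_\ell|=\Bigl|\int g\,d(P_{0,\ell}-\Pi\otimes\Pi)\Bigr|\le \|g\|_\infty\,\|P_{0,\ell}-\Pi\otimes\Pi\|_{TV}.
\]

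It remains to recall that the total variation distance between the joint law and the product of the marginals on $\sigma(X_0)\vee\sigma(X_\ell)$ is controlled by the $\beta$ coefficient. For two $\sigma$-fields, $\beta(\mathcal A,\mathcal C)$ as defined in the paper equals $\sup_{E}|P_{\mathcal A\vee\mathcal C}(E)-(P_{\mathcal A}\otimes P_{\mathcal C})(E)|$ over sets in the product $\sigma$-field; this is the classical equivalence of the partition definition with the total-variation definition. Hence $\sup_{|h|\le1}|\int h\,d(P_{0,\ell}-\Pi\otimes\Pi)|\le 2\beta(\sigma(X_0),\sigma(X_\ell))$. Since $\sigma(X_0)\subset\sigma(X_t:t\le0)$ and $\sigma(X_\ell)\subset\sigma(X_t:t\ge\ell)$, monotonicity gives a bound by $\beta(\ell)$. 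Therefore $|\gamma_\ell|\le 8K_0\,\beta(\ell)$, and $\sum_{\ell\ge1}|\gamma_\ell|\le 8K_0\sum_{m}\beta(m)<\infty$.

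The main obstacle is justifying the passage from the partition definition of $\beta$ to the bound for an integral of a bounded measurable function on the product space. I would handle it by approximating $g$ uniformly by simple functions on finite unions of measurable rectangles $A_i\times C_j$; this is possible for product-measurable functions via a monotone-class argument. For such simple functions the bound follows directly from the definition of $\beta$ with the partitions $\{A_i\}$ and $\{C_j\}$. Alternatively, one can cite the standard fact (e.g.\ Bradley's survey) that $\beta$ equals the total variation distance between the joint law and the product of the marginals, where $\|\cdot\|_{TV}=\sup_E|\mu(E)|$.
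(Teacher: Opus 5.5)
Your proposal is correct and follows the paper's proof. Both arguments write $\gamma_\ell$ as the expectation of the bounded centered kernel $\inner{\phi(x)}{\phi(y)}_{\Hk}$ at $(X_0,X_\ell)$, observe that this kernel integrates to zero under $\Pi\otimes\Pi$, bound the difference by a constant multiple of $\beta(\ell)$ via the absolute-regularity inequality for bounded functions of two variables, and sum; your version just makes the constant $8K_0$ and the monotonicity step explicit.

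One caution concerns your fallback justification. A bounded product-measurable function cannot in general be approximated \emph{uniformly} by simple functions on finite unions of rectangles; the indicator of the diagonal in $[0,1]^2$ is a counterexample. So either cite the total-variation characterization of $\beta$, as you also propose, or prove $|P_{0,\ell}(E)-(\Pi\otimes\Pi)(E)|\le\beta(\ell)$ first for finite disjoint unions of rectangles and then extend it to the product $\sigma$-field by a monotone class argument.
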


\begin{proof}
The centered kernel
\[
\widetilde k(x,y)=\inner{\phi(x)}{\phi(y)}
\]
is bounded. Moreover $\gamma_\ell=\E\widetilde k(X_0,X_\ell)$ and, if $X'$ is an independent draw from $\Pi$, $\E\widetilde k(X_0,X')=0$. Absolute regularity gives
\[
|\E\widetilde k(X_0,X_\ell)-\E\widetilde k(X_0,X')|
\le C\beta(\ell),
\]
for bounded measurable functions of two variables. Hence $|\gamma_\ell|\le C\beta(\ell)$, and the result follows from $\sum_\ell\beta(\ell)<\infty$.
\end{proof}

\begin{lemma}[Oracle lag estimation]
\label{lem:oracle-lags}
Under Assumptions~\ref{ass:kernel} and~\ref{ass:estimation}, if $b_n^3/n\to0$, then
\[
\sum_{\ell=0}^{b_n}|\hat\gamma_\ell^\circ-\gamma_\ell|\toP0.
\]
\end{lemma}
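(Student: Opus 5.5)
The plan is to bound each term in mean square, uniformly in $\ell\le b_n$, and then sum. Since $\E\hat\gamma_\ell^\circ=\gamma_\ell$ by stationarity, Chebyshev together with
\[
\E\sum_{\ell=0}^{b_n}|\hat\gamma_\ell^\circ-\gamma_\ell|\le \sum_{\ell=0}^{b_n}\{\Var(\hat\gamma_\ell^\circ)\}^{1/2}
\]
reduces the claim to a variance bound of the form $\Var(\hat\gamma_\ell^\circ)\le C(\ell+1)/(n-\ell)$ for $0\le\ell\le b_n$. This bound gives $\sum_{\ell\le b_n}\{C(\ell+1)/(n-b_n)\}^{1/2}\le C b_n^{3/2}/(n-b_n)^{1/2}$, which tends to zero exactly because $b_n^3/n\to0$. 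This explains the exponent in Assumption~\ref{ass:estimation}.

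To get the variance bound, fix $\ell$ and set $Z_t=\widetilde k(X_t,X_{t+\ell})-\gamma_\ell$, where $\widetilde k(x,y)=\inner{\phi(x)}{\phi(y)}$ is bounded by $4K_0$. Then $\{Z_t\}$ is a stationary bounded sequence and
\[
\Var(\hat\gamma_\ell^\circ)\le \frac{1}{n-\ell}\sum_{h\in\mathbb Z}|\Cov(Z_0,Z_h)|.
\]
Split the lag sum into two ranges.
\begin{itemize}
\item For $|h|\le \ell$, bound each covariance trivially by $\E Z_0^2\le C$. This contributes $C(2\ell+1)$, which is the source of the linear growth in $\ell$.
\item For $|h|>\ell$, $Z_0$ is measurable with respect to $\sigma(X_t:t\le \ell)$ and $Z_h$ with respect to $\sigma(X_t:t\ge h)$, separated by a gap of $h-\ell$. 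The standard covariance inequality for absolutely regular sequences with bounded variables, $|\Cov(U,V)|\le 2\|U\|_\infty\|V\|_\infty\beta(\cdot)$ (valid because $\beta$ dominates the strong mixing coefficient, or directly from the definition of $\beta$ via the total-variation bound on the joint law), gives $|\Cov(Z_0,Z_h)|\le C\beta(|h|-\ell)$. Summing over this range yields at most $2C\sum_m\beta(m)<\infty$.
\end{itemize}
Combining the two ranges gives $\Var(\hat\gamma_\ell^\circ)\le C(\ell+1)/(n-\ell)$ with $C$ independent of $\ell$ and $n$.

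The main obstacle is justifying this covariance inequality cleanly for a bounded function of the block $(X_t,X_{t+\ell})$. The dependence coefficient $\beta$ is defined for the past and future $\sigma$-fields of $X$, so one must check that $Z_0\in\sigma(X_t:t\le\ell)$ and $Z_h\in\sigma(X_t:t\ge h)$, and that the gap $h-\ell$ is what enters. With that done, the remaining steps are routine: Lemma~\ref{lem:gamma-summability} is not even needed here, and uniformity of the constant follows from the kernel bound $K_0$ and $\sum\beta(m)<\infty$ alone.
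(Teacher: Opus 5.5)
Your proposal is correct and follows the paper's proof essentially step for step. The argument has the same structure: a trivial bound for $|h|\le\ell$, which produces the linear growth in $\ell$, and the absolute-regularity covariance bound $C\beta(h-\ell)$ for $|h|>\ell$, which together give $\Var(\hat\gamma_\ell^\circ)\le C(\ell+1)/n$ (the paper uses $n$ with $b_n<n/2$ where you use $n-\ell$), followed by the same $L^1$ summation to $Cb_n^{3/2}/n^{1/2}\to0$.
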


\begin{proof}
Let $Z_t^{(\ell)}=\inner{\phi(X_t)}{\phi(X_{t+\ell})}$. This sequence is bounded uniformly in $t$ and $\ell$. For $h>\ell$, the variables $Z_0^{(\ell)}$ and $Z_h^{(\ell)}$ are functions of blocks separated by $h-\ell$, so the covariance inequality for absolute regularity gives
\[
|\Cov(Z_0^{(\ell)},Z_h^{(\ell)})|\le C\beta(h-\ell).
\]
For $0\le h\le\ell$, use the trivial bounded covariance bound. Therefore, uniformly for $\ell\le b_n$ and eventually $b_n<n/2$,
\[
\Var(\hat\gamma_\ell^\circ)
\le \frac{C(\ell+1)}{n}.
\]
By Cauchy's inequality,
\[
\E\sum_{\ell=0}^{b_n}|\hat\gamma_\ell^\circ-\gamma_\ell|
\le \sum_{\ell=0}^{b_n}\{\Var(\hat\gamma_\ell^\circ)\}^{1/2}
\le C\frac{b_n^{3/2}}{n^{1/2}}\to0.
\]
Thus the sum converges to zero in $L^1$ and hence in probability.
\end{proof}

\begin{lemma}[Empirical centering]
\label{lem:empirical-centering}
Under Assumptions~\ref{ass:kernel} and~\ref{ass:estimation},
\[
\sum_{\ell=0}^{b_n}|\hat\gamma_\ell-\hat\gamma_\ell^\circ|\toP0.
\]
\end{lemma}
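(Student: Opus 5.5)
Write $\bar\phi_n=n^{-1}\sum_{t=1}^n\phi(X_t)$. Since $k(x,\cdot)-\bar\Phi_n=\phi(x)-\bar\phi_n$, the empirically centered features are $\hat\phi_n(X_t)=\phi(X_t)-\bar\phi_n$. The plan is to expand each lag estimator algebraically, bound every correction term by $\|\bar\phi_n\|_{\Hk}$ times bounded quantities, and then show that $b_n\|\bar\phi_n\|_{\Hk}\toP0$.

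\textbf{Step 1 (expansion).} Expanding the inner product, for $0\le\ell<n$ we get
\[
\hat\gamma_\ell-\hat\gamma_\ell^\circ
=-\Big\langle \frac{1}{n-\ell}\sum_{t=1}^{n-\ell}\phi(X_t),\bar\phi_n\Big\rangle
-\Big\langle \bar\phi_n,\frac{1}{n-\ell}\sum_{t=1}^{n-\ell}\phi(X_{t+\ell})\Big\rangle
+\|\bar\phi_n\|_{\Hk}^2 .
\]
Each partial average has norm at most $2K_0^{1/2}$, because $\|\phi(x)\|_{\Hk}\le 2K_0^{1/2}$. Cauchy--Schwarz then gives
\[
|\hat\gamma_\ell-\hat\gamma_\ell^\circ|\le 4K_0^{1/2}\|\bar\phi_n\|_{\Hk}+\|\bar\phi_n\|_{\Hk}^2 \qquad\text{for all } 0\le\ell<n.
\]
Summing over $\ell\le b_n$ gives
\[
\sum_{\ell=0}^{b_n}|\hat\gamma_\ell-\hat\gamma_\ell^\circ|\le (b_n+1)\big(4K_0^{1/2}\|\bar\phi_n\|_{\Hk}+\|\bar\phi_n\|_{\Hk}^2\big).
\]

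\textbf{Step 2 (rate for $\bar\phi_n$).} Note that $\|\bar\phi_n\|^2_{\Hk}=\MMD_k^2(\hat\Pi_n,\Pi)$, so $\E\|\bar\phi_n\|_{\Hk}^2=R_n(k)$. Lemma~\ref{lem:gamma-summability} gives $\sum_\ell|\gamma_\ell|<\infty$. Theorem~\ref{thm:exact-risk} together with Theorem~\ref{thm:asymp-risk} then give $nR_n(k)\to\sigma_k^2<\infty$. More directly, the exact identity yields $R_n(k)\le n^{-1}(\gamma_0+2\sum_{\ell\ge1}|\gamma_\ell|)=C/n$. Markov's inequality therefore gives $\|\bar\phi_n\|_{\Hk}=O_p(n^{-1/2})$.

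\textbf{Step 3 (conclusion).} The bound in Step 1 is therefore $O_p(b_n n^{-1/2})+O_p(b_n n^{-1})$. Assumption~\ref{ass:estimation} gives $b_n^3/n\to0$, which implies $b_n^2/n\to0$, so $b_n/\sqrt n\to0$ and the sum tends to zero in probability.

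No step is a real obstacle. The only points needing care are the uniform-in-$\ell$ bound on the partial averages, which holds because $\phi$ is bounded, and the use of the summability lemma to get the $1/n$ rate for the risk. Both are handled above.
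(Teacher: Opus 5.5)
Your proposal is correct and follows essentially the same route as the paper's proof: you expand $\hat\gamma_\ell-\hat\gamma_\ell^\circ$ in the same way, you use the same uniform bound $4K_0^{1/2}\|\bar\phi_n\|+\|\bar\phi_n\|^2$ over all lags, you get the $O_p(n^{-1/2})$ rate for $\|\bar\phi_n\|$ from the exact risk identity combined with lag summability, and you conclude from $b_n/\sqrt n\to0$. Your explicit bound $nR_n(k)\le\gamma_0+2\sum_{\ell\ge1}|\gamma_\ell|$ justifies the $O(n^{-1})$ rate directly, and the paper uses that same rate, so you do not need the asymptotic theorem for this step.
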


\begin{proof}
By Lemma~\ref{lem:gamma-summability} and Theorem~\ref{thm:exact-risk},
\[
\E\|\bar\phi_n\|^2=R_n(k)=O(n^{-1}).
\]
Thus $\|\bar\phi_n\|=O_p(n^{-1/2})$. Let $B=2K_0^{1/2}$, so $\|\phi(x)\|\le B$. For each lag $\ell$,
\[
\hat\gamma_\ell
=\hat\gamma_\ell^\circ
-\inner{\frac1{n-\ell}\sum_{t=1}^{n-\ell}\phi(X_t)}{\bar\phi_n}
-\inner{\bar\phi_n}{\frac1{n-\ell}\sum_{t=1}^{n-\ell}\phi(X_{t+\ell})}
+\|\bar\phi_n\|^2.
\]
Both empirical averages in the inner products have norm at most $B$. Hence
\[
\sup_{0\le\ell\le b_n}|\hat\gamma_\ell-\hat\gamma_\ell^\circ|
\le 2B\|\bar\phi_n\|+\|\bar\phi_n\|^2.
\]
Multiplying by $b_n+1$ and using $b_n^3/n\to0$, hence $b_n/n^{1/2}\to0$, gives the result.
\end{proof}

\begin{proof}[Proof of Theorem~\ref{thm:estimator-consistency}]
The convergence $\hat\gamma_0\toP\gamma_0$ follows from Lemmas~\ref{lem:oracle-lags} and~\ref{lem:empirical-centering}. For the long-run variance estimator, write
\[
\hat\sigma_k^2-\sigma_k^2
= A_n+B_n+C_n,
\]
where
\[
A_n=(\hat\gamma_0-\gamma_0)+2\sum_{\ell=1}^{b_n}w\!\left(\frac{\ell}{b_n+1}\right)(\hat\gamma_\ell-\gamma_\ell),
\]
\[
B_n=2\sum_{\ell=1}^{b_n}\left\{w\!\left(\frac{\ell}{b_n+1}\right)-1\right\}\gamma_\ell,
\]
and
\[
C_n=-2\sum_{\ell=b_n+1}^{\infty}\gamma_\ell.
\]
The boundedness of $w$ and Lemmas~\ref{lem:oracle-lags}--\ref{lem:empirical-centering} imply $A_n\toP0$. Lemma~\ref{lem:gamma-summability}, boundedness of $w$, and pointwise convergence $w(\ell/(b_n+1))\to1$ for each fixed $\ell$ imply $B_n\to0$ by dominated convergence. The tail term $C_n\to0$ by absolute summability. Therefore $\hat\sigma_k^2\toP\sigma_k^2$. If $\sigma_k^2>0$, then $\Prob(\hat\sigma_k^2>0)\to1$, and the continuous mapping theorem gives
\[
\frac{\widehat\ESS_k}{n}=\frac{\hat\gamma_0}{\hat\sigma_k^2}\toP\frac{\gamma_0}{\sigma_k^2}.
\]
\end{proof}

\end{document}